%% file: main.tex
\documentclass[sigconf,nonacm]{acmart}
\setcopyright{none}
\renewcommand\footnotetextcopyrightpermission[1]{}

\usepackage{textcomp}
\usepackage{makecell}
\usepackage{multirow}
\usepackage{algorithm}
\usepackage{algpseudocode}
\usepackage{enumitem}
\usepackage{subcaption}
\usepackage{arydshln}
\usepackage{pifont}
\newcommand{\M}{SqLinear}
\newcommand{\blueval}[1]{\textcolor{blue}{\underline{#1}}}
\newcommand{\redval}[1]{\textbf{\textcolor{red}{#1}}}
\newcommand{\appref}[1]{\textbf{\underline{Appendix~\ref{#1}}}}

\begin{document}

% BLUE-EDIT (format compliance): removed manual negative spacing from the title.

\title[SqLinear]{SqLinear: Balanced Square Partitioning Makes Linear Interaction Sufficient for Large-Scale Traffic Forecasting}

% BLUE-EDIT (double-blind review): use a neutral anonymous author placeholder.
%\author{Anonymous Author(s)}
% \affiliation[obeypunctuation=true]{%
%   \institution{Zhejiang University},
%   \country{China}}
% \email{zqfang@zju.edu.cn}

%\title{SqLinear: A Linear Architecture for Large-Scale Traffic Prediction via Geometry-Adaptive Square Partitioning}

\author{Yongfeng Su}
\affiliation[obeypunctuation=true]{%
  \institution{Zhejiang University},
  \country{China}}
\email{yfsu@zju.edu.cn}

\author{Hongwen Li}
\affiliation[obeypunctuation=true]{%
  \institution{Southeast University},
  \country{China}}
\email{hwli@seu.edu.cn}

\author{Zijian Zhang}
\affiliation[obeypunctuation=true]{%
  \institution{Zhejiang University},
  \country{China}}
\email{zijian45@zju.edu.cn}

\author{Ziquan Fang\textsuperscript{\ding{41}}}
\affiliation[obeypunctuation=true]{%
  \institution{Zhejiang University},
  \country{China}}
\email{zqfang@zju.edu.cn}

\author{Lu Chen}
\affiliation[obeypunctuation=true]{%
  \institution{Zhejiang University},
  \country{China}}
\email{luchen@zju.edu.cn}

\author{Christian S. Jensen}
\affiliation[obeypunctuation=true]{%
  \institution{Aalborg University},
  \country{Denmark}}
\email{csj@cs.aau.dk}

\author{Hong Gao}
\affiliation[obeypunctuation=true]{%
  \institution{ZTE Corporation},
  \country{China}}
\email{gao.hong@zte.com.cn}

\author{Yinjun Han}
\affiliation[obeypunctuation=true]{%
  \institution{ZTE Corporation},
  \country{China}}
\email{han.yinjun@zte.com.cn}

\renewcommand{\shortauthors}{Yongfeng Su et al.}
\begin{abstract}
Traffic prediction is a core task in intelligent transportation systems (ITS) and urban-scale decision making. Despite the effectiveness of mainstream neural network-based methods, their deployment in real-world settings with thousands of traffic sensors is severely jeopardized by their poor computational scalability. To address this, the community has attempted to incorporate spatial database partitioning techniques (e.g., Grid, Quadtree, and K-D Tree) to improve model scalability. However, these approaches rely on handcrafted geometric heuristics and often produce irregular or imbalanced data partitions, leading to boundary fragmentation, excessive padding overheads, and degraded model accuracy.

In this paper, we propose \textbf{SqLinear}, an efficient and effective architecture for large-scale traffic prediction. First, we design \textbf{Square Partition}, a geometry-adaptive algorithm that partitions massive traffic sensors into balanced, non-overlapping, and compact spatial regions. Unlike existing heuristic-based designs, Square Partition is \textbf{theoretically grounded and provides provable guarantees} on partition utilization and split balance, establishing a high-quality foundation for downstream spatio-temporal modeling. Next, we propose a \textbf{Hierarchical Linear Interaction} (HLI) module that abandons the costly attention mechanisms commonly used in Transformer-based spatio-temporal models. HLI efficiently propagates global inter-region dependencies and refines them at the node level through a lightweight linear interaction scheme, enabling effective spatio-temporal modeling with linear computational complexity.
Extensive experiments on four large-scale traffic datasets and 11 baselines show that SqLinear reduces MAE by 2.30\% on average under the standard setting and by up to 6.78\% under extreme scalability settings, while reducing training runtime by 13.27\%--30.84\% in spatial- and horizon-scaling scenarios.
\end{abstract}

% \ccsdesc[500]{Information systems~Spatial-temporal systems}
% \ccsdesc[300]{Computing methodologies~Machine learning}

\keywords{Large-scale traffic forecasting, spatio-temporal data mining}

\maketitle

\input{sections/1_Introduction}

\input{sections/2_RelatedWork}

\input{sections/3_Preliminaries}

\input{sections/4_Method}

\input{sections/5_Experiment}

\input{sections/6_Conclusion}

%\clearpage

\balance
\bibliographystyle{ACM-Reference-Format}
\bibliography{sqlinear}

\clearpage
\appendix

\section*{Appendix Directory}
\noindent
\begin{tabular}{@{} l p{0.75\linewidth} @{}}
    \textbf{Appendix~\ref{sec:Related Work}} & \textbf{Related Work} \hfill \pageref{sec:Related Work} \\
    \addlinespace
    \textbf{Appendix~\ref{app:alg}} & \textbf{Algorithm} \hfill \pageref{app:alg} \\
    \addlinespace
    \textbf{Appendix~\ref{app:theory}} & \textbf{Theoretical Analysis} \hfill \pageref{app:theory} \\
    \hspace{4em}\ref{subsubsec:partition_analysis} & Analysis of Spatial Partition \dotfill \pageref{subsubsec:partition_analysis} \\
    \hspace{4em}\ref{subsubsection:low_rank} & Analysis of Low-rank Projection \dotfill \pageref{subsubsection:low_rank} \\
    \hspace{4em}\ref{proof:Complexity} & Complexity Analysis \dotfill \pageref{proof:Complexity} \\
    \addlinespace
    \textbf{Appendix~\ref{app:exp}} & \textbf{Additional Experiment} \hfill \pageref{app:exp} \\
    \hspace{4em}\ref{app:experimental-setup} & Detailed Experimental Setup \dotfill \pageref{app:experimental-setup} \\
    \hspace{4em}\ref{app:scalability-metrics} & Long-Horizon Prediction Performance \dotfill \pageref{app:scalability-metrics} \\
    \hspace{4em}\ref{app:Hyper-parameter} & Hyperparameter Study  (RQ5) \dotfill \pageref{app:Hyper-parameter} \\
    \hspace{4em}\ref{sec:partition_strategy} & Partition Strategy Analysis  (RQ6) \dotfill \pageref{sec:partition_strategy} \\
    \hspace{4em}\ref{app:case-study} & Case Study  (RQ7) \dotfill \pageref{app:case-study} \\
    \hspace{4em}\ref{app:limitations} & Limitations \dotfill \pageref{app:limitations} \\
\end{tabular}

\input{Appendix/4_RelatedWork}

\input{Appendix/1_Algorithm}

\input{Appendix/2_Theory}

\input{Appendix/3_Exp}

\end{document}

%% file: sections/1_Introduction.tex
\section{Introduction}

Traffic prediction plays a foundational role in urban computing applications, with the primary objective of predicting critical traffic parameters including flow, speed, and occupancy, through the analysis of historical traffic data~\cite{10.1145/3637528.3671961,10.1145/3690624.3709323,10.1145/3534678.3539396}. Traffic prediction is important for maintaining an efficient transportation system and also helps mitigate urban congestion~\cite{10.1145/3564594,10.1145/3690624.3709201,liu2024spatial}. As a result, \textbf{spatio-temporal traffic prediction} attracts considerable attention in both academia and industry~\cite{10.1609/aaai.v37i4.25555,ZengZhihao,10.1145/3637528.3671665}.

The research community has developed approaches to traffic modeling that employ different neural architectures, spanning recurrent neural networks (RNNs)~\cite{DBLP:conf/iclr/LiYS018,yang2024predicting,kong2023dynamic}, convolutional neural networks (CNNs)~\cite{wu2019graph,yu2018spa,DBLP:journals/corr/abs-2103-07719}, multi-layer perceptrons (MLPs)~\cite{shao2022spatial,yeh2024rpmixer,han2024bigst}, graph neural networks (GNNs)~\cite{wu2020connecting,lan2022dstagnn,shao2022decoupled}, and Transformer architectures~\cite{guo2019attention,zheng2020gman,fang2023spatio}. While these methods can capture complex spatio-temporal dependencies, their deployment faces scalability pressure along both the sensor dimension and the forecasting dimension: \textbf{\textit{real-world systems must process city-scale sensor networks and support forecasts beyond short-step prediction}}. However, previous studies often assume small-scale or regional traffic systems, overlooking the massive scale of real-world transportation systems. A prominent example is California's traffic monitoring system, which incorporates nearly 20,000 sensor nodes~\footnote{{\url{https://pems.dot.ca.gov}}}. At this scale, \textbf{\textit{mainstream Spatio-Temporal Graph Neural Networks (STGNNs) and Transformer-based models become computationally infeasible}}: modeling global interactions among 20,000 nodes at quadratic complexity results in hundreds of millions of pairwise interactions per time step, and the cost is further multiplied across long historical windows and forecasting horizons, leading to prohibitive GPU memory consumption. As a consequence, it is difficult to deploy these models in city-scale transportation systems.

To address scalability challenges in large-scale prediction settings, \textbf{spatial database partitioning} is a widely adopted strategy~\cite{fang2024efficient, yuan2018hetero}. Existing techniques generally fall into two categories: \textbf{\textit{Road Network-based Methods}} that cluster traffic sensors based on topological connectivity, and \textbf{\textit{Free Space-based Methods}} that partition traffic sensors according to geographic coordinates. Road network-based approaches explicitly preserve graph structure, but they often incur substantial computational overhead and generate highly irregular subgraphs with skewed sizes and shapes. Such irregularity complicates parallel processing and capacity-bounded modeling, limiting their scalability in practice~\cite{han2024topology, chiang2019cluster}. Consequently, recent studies~\cite{an2024lisa,10.1145/3637528.3671662} have shifted toward Free Space-based strategies (e.g., Grid~\cite{10.1145/356789.356797}, Quadtree~\cite{samet1984quadtree}, K-D Tree~\cite{10.1145/361002.361007}), which originate from the data management literature and offer lightweight means of grouping large numbers of traffic sensors into compact spatial regions suitable for scalable spatio-temporal modeling. In this study, we also focus on the free space setting, as it represents a fundamental and widely adopted abstraction for large-scale spatial data management~\cite{li2020just,li2023trajmesa}. Importantly, our approach is not restricted to free space and can be applied to \textbf{road network settings} (\textbf{Sec.~\ref{ScalabilityAnalysis}}).

Despite previous efforts on leveraging spatial partitioning to enhance prediction efficiency, we observe critical limitations. As shown in Figs.~\ref{intro:model}(a) and (b), \textbf{Grid-based} and \textbf{Quadtree-based} methods yield partitions with both imbalanced node distributions and excessive empty regions. As shown in Fig.~\ref{intro:model}(c), despite their adaptive splitting strategy, \textbf{K-D Tree-based} methods generate elongated patches that fail to preserve the spatial neighborhood structures of traffic nodes~\cite{samet2006foundations}. These deficiencies not only reduce the potential computational benefits but also reduce the abilities of models to capture complex spatial dependencies. \textbf{\textit{Overall, achieving effective spatial partitioning has emerged as a fundamental requirement for realizing efficient large-scale traffic prediction.}} This requirement leads us to identify two key challenges below.

\begin{figure}[t]
  \centering
   %\vspace{-0.1cm}
    \includegraphics[width=\columnwidth]{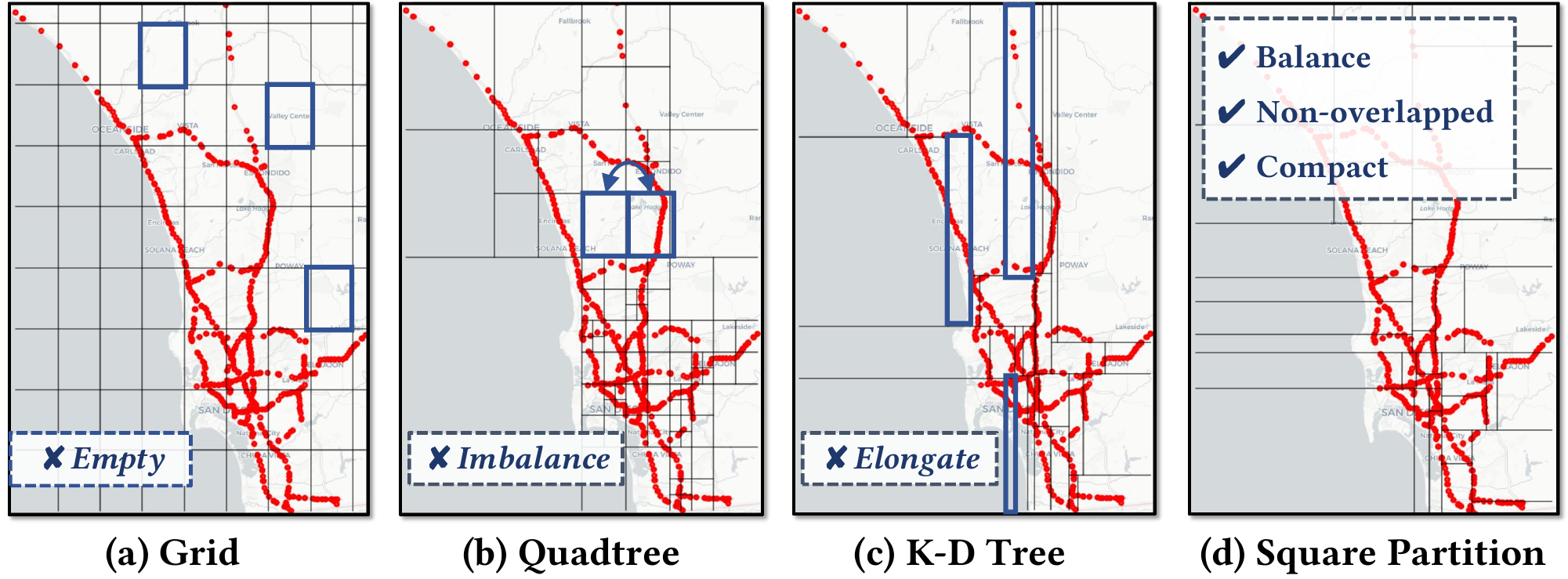}
    \includegraphics[width=0.98\columnwidth]{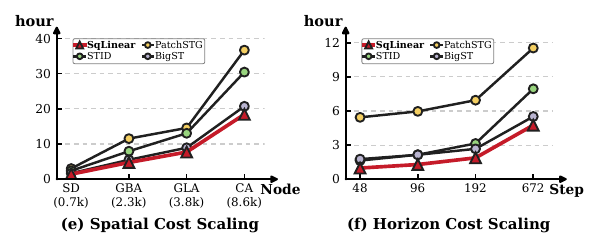}
    \vspace{-3mm}
  \caption{Top: spatial partitioning quality; Bottom: training cost scaling with node count and prediction horizon.}
   \vspace{-5mm}
  \label{intro:model}
\end{figure}

\textbf{\textit{Challenge I: How to generate high-quality spatial partitions to improve spatio-temporal modeling efficiency?}} While existing partitioning approaches show promise for large-scale data processing, employing these methods directly for traffic prediction is suboptimal due to their rigid geometric partitioning schemes. As shown in Fig.~\ref{intro:model}, \textit{Whether based on Grid, Quadtree, or K-D Tree partitioning, existing methods employ splitting strategies that are unable to contend well with the underlying traffic data distribution characteristics}. Specifically, (i) Grid-based methods enforce uniform partitions regardless of node density variations, (ii) Quadtree-based methods employ fixed midpoint quadrant divisions, and (iii) K-D Tree-based methods implement alternating median splits along the coordinate axes. %These inflexible approaches inevitably generate partitions with either severe node imbalance or geometrically irregular shapes. 
While some solutions~\cite{fang2024efficient} utilize padding operations, this introduces undesirable computational overheads without addressing the core shape irregularity problem. For example, PatchSTG ~\cite{fang2024efficient} implementation on the CA dataset~\cite{liu2024largest} requires padding 8,600 nodes across 512 patches to a uniform size of 24, resulting in a 43\% computational overhead. More importantly, these suboptimal partitions can weaken local spatial context, creating a bottleneck for prediction performance. According to \textit{the First Law of Geography}~\cite{tobler1970computer}, traffic patterns are most strongly correlated within compact local neighborhoods. This insight motivates compact spatial patches that preserve short-range locality while still supporting efficient tensorization. \textit{The key challenge is to achieve a geometry-adaptive partitioning scheme that simultaneously achieves i) balanced node distribution across patches, ii) elimination of the need for padding, and iii) compact spatial layouts that reduce severe elongation.}

\textbf{\textit{Challenge II: How to enable scalable spatio-temporal traffic prediction?}} Although spatial partitioning reduces the number of modeling units from $N$ to $P$ ($N$ and $P$ denote the number of traffic nodes and partitions, respectively), there are three modeling challenges. (i) \textbf{Topology recovery}. Since partitions are primarily based on geometric coordinates rather than road networks, continuous arterial corridors may be split across partitions, or topologically unrelated road segments may appear in the same partition. As a result, the model must propagate information globally across patches without relying on expensive graph computations. (ii) \textbf{Patch-scale computation}. Even after partitioning, large-scale networks can generate hundreds to thousands of patches $P$, posing significant computational challenges for Transformer-based models. For instance, PatchSTG~\cite{fang2024efficient} attempts to implicitly reconstruct severed topological connections between patches via global self-attention, which exhibits quadratic complexity $O(P^2)$. With the CA dataset partitioned into $P=512$ patches, each attention layer must compute and store $P \times P = 262,144$ pairwise attention scores. As illustrated in Fig.~\ref{intro:model}(e), with increasing numbers of traffic nodes, existing approaches incur high computational costs due to their quadratic complexity. 
(iii) \textbf{Temporal scalability}. Beyond spatial scale, practical forecasting often extends from short-step prediction to longer forecasting ranges. However, as reported in Fig.~\ref{intro:model}(f), existing methods suffer a significant drop in efficiency as the prediction horizon expands. A scalable model should therefore preserve robust spatio-temporal representations without relying on increasingly costly attention or graph propagation modules. \textit{Overall, the key challenge is to develop a scalable spatio-temporal modeling framework that i) effectively captures both per-node local dynamics and global cross-patch relations, ii) operates with sub-quadratic complexity to scale efficiently to thousands of spatial patches, and iii) maintains favorable accuracy-efficiency trade-offs under extended forecasting ranges.}

In this paper, we propose SqLinear, an effective and efficient \underline{Linear} architecture for large-scale traffic forecasting via geometry-adaptive \underline{Sq}uare partitioning. 
To overcome \textbf{\textit{Challenge I}}, we design \textbf{Square Partition} as a traffic-oriented spatial patching scheme rather than directly employing a general-purpose spatial index. It selects splitting axes dynamically according to maximum span dimensions and chooses capacity-aware split positions to balance node counts across patches. As shown in Fig.~\ref{intro:model}(d), this design encourages compact, non-overlapping spatial patches without requiring padding, thereby providing scalable tensor units for downstream traffic forecasting. 
To address \textbf{\textit{Challenge II}}, we develop a \textbf{Hierarchical Linear Interaction (HLI)} module, a computationally efficient neural architecture alternative to attention mechanisms. By decomposing spatial dependencies into global inter-patch communication and per-node local refinement, HLI reduces quadratic complexity to linear complexity while maintaining modeling capacity.
%\textcolor{blue}{For spatial scalability, Fig.~\ref{intro:model}(e) compares training costs across networks of increasing size under the longest forecasting horizon of 672 steps. For temporal scalability, Fig.~\ref{intro:model}(f) reports the training cost on the largest CA dataset as the forecasting horizon increases.}
In summary, this paper makes the following contributions:
\begin{itemize}[leftmargin=*]
    \vspace{-0.1mm}
    \item \textbf{High-Quality Partitioning.} We propose a traffic-oriented spatial patching scheme that organizes irregular sensor nodes into balanced, non-overlapping, and compact patches without padding operations required by existing methods, establishing a scalable data layout for large-scale traffic prediction.

    \item \textbf{Hierarchical Linear Modeling.} We develop a hierarchical linear interaction module that efficiently captures inter-patch spatio-temporal dependencies and refines them per node, reducing the quadratic complexity of attention-based approaches to linear complexity while maintaining modeling fidelity.
 
    \item \textbf{Theoretical Analysis.} We analyze the partition utilization, node balance, compactness, and computational complexity of SqLinear, clarifying why the proposed data layout and linear interaction design support scalable forecasting.

    \item \textbf{Extensive Experiments.} We evaluate SqLinear against 11 competitive baselines across four large-scale datasets. The results demonstrate SqLinear's state-of-the-art accuracy, favorable accuracy-efficiency trade-offs, and superiority when scaling along both sensor-network size and prediction length.
\end{itemize}

%The rest of the paper is organized as follows. We present preliminaries in Section~\ref{sec:preliminaries}. Section~\ref{sec:method} describes the method. Experimental findings are reported in Section~\ref{sec:Experiments}. Finally, we conclude the paper in Section~\ref{sec:CONCLUSION}. Due to space limitations, the \textbf{Related Work} section is provided in \appref{sec:Related Work}. 

%% file: sections/2_RelatedWork.tex
% \section{Related Work}
% We provide a detailed review of related work on spatio-temporal prediction and patching strategies in deep learning in \underline{\textbf{Appendix~\ref{sec:Related Work}}}.

\section{Related Work}
A comprehensive review of related work on spatio-temporal prediction and patching strategies in deep learning is deferred to \underline{\textbf{Appendix~\ref{sec:Related Work}}}. To facilitate quick comparison, Table~\ref{Related:comp} highlights the main distinctions between our approach and the most closely related methods.

\begin{table}[tb]
    \aboverulesep=0ex
    \belowrulesep=0ex
    \caption{Comparison of SqLinear with related studies.}
    \vspace{-4mm}
    \centering    
    \resizebox{1.0\linewidth}{!}{
    \begin{tabular}{c|ccc|c} 
    \toprule
        Model & \multicolumn{1}{c}{\makecell[c]{Large-Scale\\Input}} & \multicolumn{1}{c}{\makecell[c]{Spatial\\Management}}& \multicolumn{1}{c}{\makecell[c]{Scalable Data-Layout\\Analysis}}   & \multicolumn{1}{|c}{\makecell[c]{Complexity\\w.r.t. $N$}}\\
        \midrule
        AGCRN [\textit{NIPS} 20] & \ding{56} & \ding{56} & \ding{56} & $O(N^2)$\\
        D2STGNN [\textit{VLDB} 22]  & \ding{56} & \ding{56} & \ding{56} & $O(N^2)$\\
        BigST [\textit{VLDB} 24] & \ding{52} & \ding{56} &  \ding{56} & $O(N)$\\
        PatchSTG [\textit{KDD} 25] & \ding{52} & \ding{52} &\ding{56} & $O(N\sqrt{N})$\\
        \midrule
        \makecell[c]{\textbf{SqLinear}} & \ding{52} & \ding{52} & \ding{52} & $O(N)$\\
        \bottomrule
    \vspace{-5mm} 
    \end{tabular}}
    \label{Related:comp}
\end{table}

%% file: sections/3_Preliminaries.tex
\section{Preliminaries}
\label{sec:preliminaries}

A traffic monitoring system consists of $N$ fixed sensor nodes distributed over a road network, where each node corresponds to a physical observation point, such as a loop detector, camera station, or road segment sensor.

\textit{\textbf{Definition (Spatio-Temporal Data)}}. At each discrete time step, every sensor records $D$ traffic variables, such as traffic flow, average speed, and occupancy. We represent the resulting spatio-temporal observations as a tensor $X \in \mathbb{R}^{T \times N \times D}$, where $T$ is the number of time steps and $X_{t,n,:}$ denotes the traffic state observed by sensor node $n$ at time step $t$. Each sensor node is associated with geospatial coordinates, specified by its latitude $Lat$ and longitude $Lng$.

\textit{\textbf{Problem Formulation (Traffic Prediction)}} Given a historical traffic sequence of the past $H$ time steps, denoted as $X_H = [X_{t-H+1}, X_{t-H+2}, \dots, X_{t}] \in \mathbb{R}^{H \times N\times D}$, the objective is to learn a mapping function $f(\cdot)$ parameterized by learnable parameters $\theta$. The function forecasts the traffic sequence for the next $F$ time steps, denoted as $Y_F = [Y_{t+1}, Y_{t+2}, \dots, Y_{t+F}] \in \mathbb{R}^{F \times N \times D}$. This process can be expressed as follows.

\begin{equation}
[X_{t-H+1}, X_{t-H+2}, \dots, X_{t}] \xrightarrow[\theta]{f(\cdot)} [Y_{t+1}, Y_{t+2}, \dots, Y_{t+F}],
\end{equation}
where each $X_i \in \mathbb{R}^{N\times D}$ or $Y_i \in \mathbb{R}^{N\times D}$ represents the traffic states of all $N$ sensor nodes at one time step. 

%The prediction target, therefore, preserves the sensor-node granularity of the original traffic monitoring system.

Note that, given historical observations from all sensor nodes, the model predicts the future traffic state of each sensor node rather than an aggregate value for a road network or geographic area.

%% file: sections/4_Method.tex
\begin{figure*}[t]
  \centering
   \vspace{-3mm}
\includegraphics[width=\linewidth]{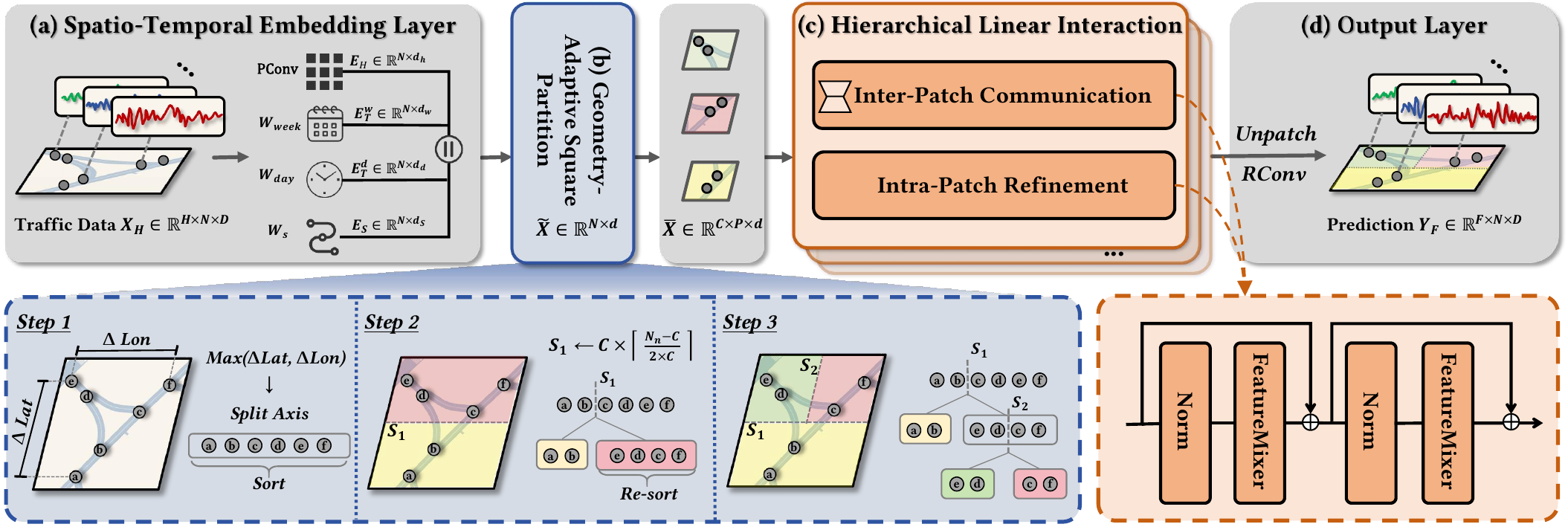}
  \caption{The architecture and workflow of \M.}
\label{method:modelfra}
\end{figure*}

\section{Methodology}
\label{sec:method}

The SqLinear architecture, depicted in Fig.~\ref{method:modelfra}, encompasses four main modules: (a) Spatio-Temporal Embedding Layer, (b) Geometry-Adaptive Square Partition, (c) Hierarchical Linear Interaction, and (d) Output Layer. We proceed to first elaborate on each module and then present a theoretical analysis of the method. %See \appref{app:alg} for the complete workflow pseudocode of SqLinear.

\subsection{Spatio-Temporal Embedding Layer}
The Spatio-Temporal Embedding Layer is illustrated in Fig.~\ref{method:modelfra}(a). First, we embed tokens through a temporal convolution. Specifically, the input data $X_H$ is transformed into the embedding $E_H \in \mathbb{R}^{N \times d_h}$:
\begin{equation}
  E_H = \mathit{PConv}([X_H \,\vert\vert\, \tau_d/T_d \,\vert\vert\, \tau_w/T_w]; \Theta_h),
  \label{eq: token emb}
\end{equation}
where $\mathit{PConv}$ denotes a convolution with kernel size and stride $s_t$ along the temporal axis ($s_t=H$ in our setting); $\tau_d,\tau_w$ are the time-of-day and day-of-week indices of the input steps; $N$ is the number of nodes, $d_h$ is the hidden dimensionality of $E_H$, and $\Theta_h$ denotes the learnable parameters of PConv.

To preserve the temporal information in the tokens, we maintain two learnable embedding tables and index them using the time-of-day and day-of-week index of the most recent input step, $\tau_d,\tau_w$. The time-of-day embedding $E_T^{d} \in \mathbb{R}^{N \times d_d }$ and day-of-week embedding $E_T^{w}\in \mathbb{R}^{N \times d_w }$ are retrieved as follows:
\begin{equation}
  E_T^{d} = W_{day}[\tau_d],\quad
  E_T^{w} = W_{week}[\tau_w],
\label{eq: te_2}
\end{equation}
where $W_{day} \in \mathbb{R}^{T_d \times d_d}$ and $W_{week} \in \mathbb{R}^{T_w \times d_w}$ are the learnable parameter embeddings for the time-of-day and day-of-week, respectively. Here, $T_d$ is the number of time-of-day slots, $T_w$ is the number of days in a week, and $d_d$ and $d_w$ are the numbers of dimensions of the respective temporal embeddings. To represent node-specific spatial characteristics, we maintain a learnable node identity embedding~\cite{shao2022spatial}: $E_S = W_s$, where $W_s \in \mathbb{R}^{N \times d_s}$ is a freely learnable parameter matrix and $d_s$ is the hidden dimensionality for the spatial embedding. Finally, we concatenate the above embeddings to obtain the spatio-temporal embedding:

\begin{equation}
    \tilde{X} =  E_H \vert\vert E_T^{d}\vert\vert E_T^{w}\vert\vert E_S,
\end{equation}
where $\tilde{X}\in\mathbb{R}^{N\times d}$, $d=d_h+d_d+d_w+d_s$, and $\vert\vert$ denotes concatenation along the feature dimension.

\subsection{Geometry-Adaptive Square Partition}
We propose the Geometry-Adaptive Square Partition to organize spatial nodes into distinct sub-patches recursively. ``\textbf{Geometry-adaptive}'' means the splitting axis and splitting position are determined by the distribution of sensor nodes, rather than by fixed grid boundaries, Quadtree midpoints, or alternating K-D Tree axes.

The process begins with an entire region containing $N_n$ nodes and continues to partition it until the number of nodes in any resulting sub-patch is at most a predefined leaf node capacity $C$. For instance, the example in Fig.~\ref{method:modelfra}(b) shows an initial region of $N_n$=6 nodes being processed with capacity $C$ = 2. This method is designed to encourage compact spatial patches while keeping node counts balanced, thereby reducing severe elongation and padding overhead in downstream tensor processing. The algorithm proceeds in three main steps, as detailed below.
% \begin{figure}[t]
%   \centering
%     \includegraphics[width=0.8\linewidth]{pics/SP.pdf}
%   \caption{The Square Partition Algorithm.}
%   \label{method:datt}
% \end{figure}

\textbf{\textit{\underline{Step 1}: Determination of the Splitting Axis.} }
Given a set of nodes $N$ within a defined region, where each node $n \in N$ has coordinates $(n_{\mathrm{Lng}}, n_{\mathrm{Lat}})$, the spatial extent in longitude and latitude is first computed. The longitudinal span $\Delta_{\mathrm{Lng}}$, and the latitudinal span $\Delta_{\mathrm{Lat}}$, are defined as $
\Delta_{\mathrm{Lat}} = \max_{n \in N}(\mathrm{Lat}_{n}) - \min_{n \in N}(\mathrm{Lat}_{n}), \Delta_{\mathrm{Lng}} = \max_{n \in N}(\mathrm{Lng}_{n}) - \min_{n \in N}(\mathrm{Lng}_{n})
$, where $\mathrm{Lat}_{n}$ and $\mathrm{Lng}_{n}$ are the latitude and longitude of a node $n$, respectively. The axis with the larger span is selected as the splitting axis for the current partition:

\begin{equation}
\mathit{Split Axis} =
\begin{cases}
\mathrm{Longitude}\quad\text{if } \Delta_{\mathrm{Lng}} \geq \Delta_{\mathrm{Lat}} \\
\mathrm{Latitude}\quad \text{else}
\label{splitaxis}
\end{cases}
\end{equation}

It splits along the region's longest dimension to reduce severe elongation and encourage compact patches.

\textbf{\textit{\underline{Step 2}: Identification of the Splitting Hyperplane.}} Once the splitting axis is determined, a capacity-aware splitting position is established along this axis to partition the nodes into two subsets with controlled size imbalance. The nodes are organized via a single-key sorting operation, arranged in ascending order of their scalar coordinate values along the splitting axis. The split point $S_p$, which corresponds to an index in the sorted list of nodes, is calculated as follows:
\begin{equation}
S_p = C\left\lceil\frac{N_n-C}{2C}\right\rceil,
\label{splitpoint}
\end{equation}
where $\lceil \cdot \rceil$ is the ceiling function. This formulation sets one child patch to contain $S_p$ nodes and the other to contain $N_n-S_p$ nodes. The resulting split imbalance is bounded by a constant controlled by the leaf capacity $C$ (specifically, at most $C$ as analyzed in \textbf{Theorem~\ref{theorem:Node Balance Degree}}), thus achieving a high degree of balance for recursive tensor construction. If $N_n \le C$, the region is not partitioned further and constitutes a leaf node.

\textbf{\textit{\underline{Step 3}: Recursive Subdivision.}} The node set of the current patch is divided into two patches based on the position of the splitting hyperplane. This process is then applied recursively to each of the newly formed patches, independently determining their respective splitting axes and hyperplanes. The recursion continues until the number of nodes in each patch is at most the specified capacity $C$, at which point the spatial partitioning is complete. The overall process is formulated as follows:
\begin{equation}
\mathrm{Index} = \text{BFS}(\mathrm{Square}(\mathrm{Lat},\mathrm{Lng},C)),
\label{BFS}
\end{equation}
where $\mathrm{Index}\in\mathbb{R}^{N}$ is the resulting index, $\mathrm{Square}(\cdot)$ and BFS$(\cdot)$ denote the Square Partition construction and breadth-first search operation, and $\mathrm{Lat}$ and $\mathrm{Lng}$ represent the sets of latitude and longitude coordinates of all nodes.

Using the constructed balanced and non-overlapping spatial patch indices, the spatio-temporal embedding $\tilde{X}$ undergoes a patching operation defined as follows:
\begin{equation}
\bar{X} = \text{Patching}(\mathit{Index}, \tilde{X})
\end{equation}

The patching operation only reorders and groups the node dimension according to $\mathit{Index}$; it does not split the concatenated historical, temporal, and spatial feature dimensions. Since $C$ divides $N$ in our experimental settings, the patching is padding-free, yielding $\bar{X} \in \mathbb{R}^{C \times P \times d}$, where $C$ denotes the number of nodes within each patch and $P$ denotes the number of patches. See \appref{app:alg} for the complete workflow pseudocode of Square Partition.

% BLUE-EDIT: synchronized the main-text claims with the corrected theoretical results.

\textbf{Theoretical Analysis.} We next present the main theoretical guarantees of Square Partition. For brevity, we provide only the key results here, while complete proofs of all theorems are deferred to \appref{app:theory}. Square Partition is conceptually related to balanced K-D Trees~\cite{10.1145/361002.361007} and balanced-aspect-ratio (BAR) trees, but differs fundamentally in both splitting and optimization objective.

i) Unlike classic K-D Trees, which alternate the splitting axis according to tree depth regardless of the region geometry and may repeatedly partition the shorter side, Square Partition always splits along the \emph{longest-span} axis. Consequently, it never subdivides the shorter dimension and therefore avoids systematically amplifying geometric elongation. This property is formalized in \textbf{Proposition~\ref{prop:axis-selection}}.

ii) Compared with BAR-trees, which control aspect ratios through recursive ``donut'' cuts at the expense of node-count balance, Square Partition integrates longest-axis splitting with capacity-aware split-point selection (Eq.~\ref{splitpoint}). This design simultaneously guarantees bounded partition imbalance (see \textbf{Theorem~\ref{theorem:Node Balance Degree}}), padding-free tensorization (see \textbf{Theorem~\ref{theorem:Patch Utilization}}), and avoidance of forced shorter-axis cuts (\textbf{Proposition~\ref{prop:axis-selection}}).

iii) More importantly, unlike general-purpose spatial indexing structures designed for range and $k$NN queries, Square Partition is specifically tailored for tensorized trajectory forecasting. By enforcing a capacity constraint on every leaf node, each partition can be directly mapped to a padding-free tensor in $\mathbb{R}^{C\times P\times d}$, thereby aligning the partitioning objective with the computational requirements of downstream forecasting models.

\subsection{Hierarchical Linear Interaction}
To efficiently model the complex spatial dependencies in traffic flow data, we propose a novel Hierarchical Linear Interaction (HLI) Block. As illustrated in Fig.~\ref{method:modelfra}(c), this block separates spatial modeling into two stages. First, it performs \textbf{Inter-Patch Communication} between different regions to capture global, long-range dependencies. Second, it performs \textbf{Intra-Patch Refinement}, fusing this global context into each node's own representation. This macro-to-micro design allows each node to integrate network-wide context at low computational cost.

The Hierarchical Linear Interaction blocks are stacked in $L$ structurally identical layers. We therefore describe the computational process at the $l$-th layer ($1 \leq l \leq L$).

\subsubsection{Inter-Patch Communication.} 
This module captures global dependencies between patches. It first groups nodes that share the same within-patch index across all patches and processes initial feature interactions via a feature mixer:
\begin{equation}
H_{\text{inter}}^{(l)} = \text{FeatureMixer}^{(l)}\left(\text{Norm}^{(l)}(\bar{X}^{(l-1)})\right),
\end{equation}
where $\text{Norm}^{(l)}$ is layer normalization, $\text{FeatureMixer}^{(l)}$ is a lightweight GELU-activated MLP applied position-wise along the feature dimension, $H_{\text{inter}}^{(l)}$ denotes the output from $\text{FeatureMixer}^{(l)}$, and $\bar{X}^{(l-1)}$ denotes the output from layer $l-1$. 
% A learnable inter-patch projection matrix then explicitly models influence weights between patches:
% \begin{equation}
% \bar{H}_{\text{inter}}^{(l)} = f\left(H_{\text{inter}}^{(l)}; \Theta_{\text{inter}}^{(l)}\right)
% \end{equation}
% where $\bar{H}_{\text{inter}}^{(l)}$ denotes the output, $f(\cdot)$ denotes the linear transformation parameterized by $\Theta_{\text{inter}}^{(l)} \in \mathbb{R}^{P \times P}$, which encodes directional influence weights between patches.
\begin{figure}[t]
    \centering
    \includegraphics[width=0.9\columnwidth]{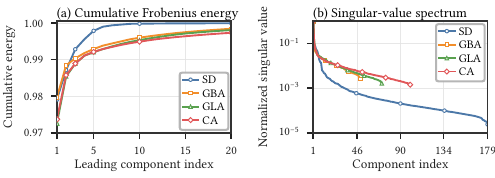}
      \vspace{-5mm}
    \caption{Low-rank measurement.}
    \label{Pic:Low-rank}
    \vspace{-5mm}
\end{figure}

Large-scale traffic systems often exhibit spatial redundancy, as nearby areas may share synchronized temporal dynamics, suggesting that patch-wise dependencies can be approximated by low-rank structures. To examine this using training data only, we compute signed Pearson dependency matrices by treating patches as variables and stacking time steps and within-patch indices as observations, matching the shared inter-patch operator used by HLI. As shown in Fig.~\ref{Pic:Low-rank}(a), the cumulative Frobenius energy rapidly approaches one: the leading component captures 97.3\%--98.0\% across the four datasets, and both the 90\% and 95\% thresholds are reached at rank 1. Fig.~\ref{Pic:Low-rank}(b) shows the corresponding long-tail decay of the normalized singular values. These observations provide empirical motivation for compactly modeling patch-wise dependencies. Accordingly, to capture the global relationships among $P$ patches, we model their influence weights using a low-rank projection:

\begin{equation}
\bar{H}_{\text{inter}}^{(l)} = f\left(H_{\text{inter}}^{(l)}; \Theta_{\text{inter}}^{(l)}\right),
\end{equation}
where $f(\cdot)$ denotes the linear transformation parameterized by $\Theta_{\text{inter}}^{(l)}$ that consists of two low-rank matrices $\Theta_{r1}^{(l)} \in \mathbb{R}^{P \times r}$ and $\Theta_{r2}^{(l)} \in \mathbb{R}^{r \times P}$. This factorization, with predefined rank $r \ll P$ (with $r \le d$ in all our settings), provides a compact inter-patch communication operator with complexity linear in $P$ for fixed $r$. \textbf{Theorem~\ref{theorem:low_rank} }characterizes the approximation capacity of this factorized projection. The projected context is merged back via a residual, then refined by a feed-forward network with its own residual:
\begin{equation}
U^{(l)} = \bar{X}^{(l-1)} + \bar{H}_{\text{inter}}^{(l)},
\end{equation}
\begin{equation}
H_{\text{out}}^{(l)} = U^{(l)} + \text{FFN}^{(l)}\left(\text{Norm}^{(l)}(U^{(l)})\right),
\end{equation}
producing globally-aware representations $H_{\text{out}}^{(l)}\in\mathbb{R}^{C\times P\times d}$, where $\text{FFN}^{(l)}$ denotes the feed-forward network, and $U^{(l)}$ is the patch representation after inter-patch communication.

\subsubsection{Intra-Patch Refinement.} 
This module reshapes $H_{\text{out}}^{(l)}$ from global processing and refines each node's representation via a position-wise transform:
\begin{equation}
H_{\text{intra}}^{(l)} = \text{FeatureMixer}^{(l)}\left(\text{Norm}^{(l)}(H_{\text{out}}^{(l)})\right),
\end{equation}
where $H_{\text{intra}}^{(l)}$ holds the refined per-node features, merged back via a residual:
\begin{equation}
V^{(l)} = H_{\text{out}}^{(l)} + H_{\text{intra}}^{(l)},
\end{equation}
before a second residual applies the branch's feed-forward network:
\begin{equation}
\bar{X}^{(l)} = V^{(l)} + \text{FFN}^{(l)}\left(\text{Norm}^{(l)}(V^{(l)})\right),
\end{equation}
thereby producing layer $l$'s refined output $\bar{X}^{(l)}\in\mathbb{R}^{C\times P\times d}$ in which every node has absorbed the inter-patch context.

% BLUE-EDIT: retain the original local/intra/slot narrative while keeping the
% corrected mathematical scope of Proposition C.8.
\textbf{Approach Analysis}. Although HLI adopts a feed-forward structure, it differs fundamentally from the traditional MLP-Mixer. First, the Mixer assumes fixed-size patches, whereas Square Partition yields capacity-bounded, geometry-adaptive regions with heterogeneous node layouts; HLI therefore mixes patches at each fixed within-patch index rather than over a fixed token grid. Second, HLI decouples local and global modeling, pairing per-node position-wise refinement with a low-rank global projection motivated by the inherently low-rank nature of patch-level traffic correlations. Third, unlike the Mixer's dense all-token mixing, HLI is confined to the geometry-adaptive patch structure, reducing cost and enabling per-patch parallelism. \textbf{Theorem~\ref{theorem:low_rank}} characterizes the approximation capacity of the rank-$r$ factorization for a fixed target matrix, while \textbf{Proposition~\ref{prop:structured-mixing}} establishes the shared structured form of the isolated inter-patch projection. These differences make HLI a novel interaction operator tailored for large-scale spatio-temporal analytics.

\subsection{Output Layer}
We leverage the output $\bar{X}^{(L)}$ from the Hierarchical Linear Interaction module to predict future traffic, as illustrated in Fig.~\ref{method:modelfra}(d). We unpatch $\bar{X}^{(L)}$ by inverting the patching permutation as follows:
\begin{equation}
\tilde{Y} = \text{Unpatch}(Index, \bar{X}^{(L)}),
\end{equation}
where $\text{Unpatch}$ applies the inverse permutation of $\mathit{Index}$, and $\tilde{Y}\in\mathbb{R}^{N\times d}$ contains node representations restored to their original indexing. Finally, we design a regression convolution to predict the traffic features $Y_F$ of the following $F$ time steps:
\begin{equation}
  {Y}_F = \text{RConv}(\tilde{Y}; \Theta_r),
  \label{eq: rconv}
\end{equation}
\noindent where $\text{RConv}$ denotes regression convolution and  $\Theta_r$ are the learnable parameters for this layer.

\subsection{Topology-Aware Extension}
\label{ScalabilityAnalysis}
A potential concern with coordinate-based partitioning is that it may merge topologically disconnected links or split continuous roadways. To address this issue, the Square Partition can be extended to operate directly on road networks. %We proceed to consider to strategies for achieving this.

\subsubsection{Geometry-First Road Partitioning.}
The Square Partition algorithm can be constrained to operate on nodes sharing the same Road ID, which ensures that a patch never mixes sensors from different highways. The overall process, which replaces the original Eq.~\ref{BFS}, is formulated as follows:
\begin{equation}
\mathrm{Index} = \text{BFS}(\mathrm{Square}({(\mathrm{Lat}, \mathrm{Lng}) \mid id}, C))
\end{equation}
Here, $id$ denotes a road identifier, and ($\mathrm{Lat}$, $\mathrm{Lng}$) represents the coordinates of nodes belonging to the road identified by id. This strategy recursively partitions long roads into smaller, well-balanced segments without cross-contamination from geometrically adjacent but topologically distinct roads.

\begin{table*}[tb]
\centering
\vspace{-0.3cm}
\caption{The overall performance comparison of large-scale traffic forecasting. \redval{Bold} indicates the best performance, and \blueval{underline} denotes the second best performance. The ``-'' marker indicates unavailable results. The standard setting uses 12 historical steps to predict 12 future steps; FaST is evaluated with batch size 64. MAPE is reported in percentage.}
\vspace{-3mm}
\label{exp:main1}
\renewcommand{\arraystretch}{1}
\resizebox{0.95\textwidth}{!}{
\begin{tabular}{l|ccc|ccc|ccc|ccc}
\toprule
\multirow{2}{*}{\textbf{Method}} &
\multicolumn{3}{c|}{\textbf{SD}} &
\multicolumn{3}{c|}{\textbf{GBA}} &
\multicolumn{3}{c|}{\textbf{GLA}} &
\multicolumn{3}{c}{\textbf{CA}} \\
\cmidrule(lr){2-4}\cmidrule(lr){5-7}\cmidrule(lr){8-10}\cmidrule(lr){11-13}
 & \textbf{MAE} & \textbf{RMSE} & \textbf{MAPE} &
   \textbf{MAE} & \textbf{RMSE} & \textbf{MAPE} &
   \textbf{MAE} & \textbf{RMSE} & \textbf{MAPE} &
   \textbf{MAE} & \textbf{RMSE} & \textbf{MAPE} \\
\midrule
GWNET & 18.17 & 30.31 & 12.15 & 20.84 & 33.43 & 17.81 & 21.36 & 33.78 & 13.79 & 21.17 & 33.51 & 16.92 \\
AGCRN & 18.48 & 33.04 & 13.63 & 20.62 & 34.02 & 16.12 & 20.69 & 36.33 & 13.05 & - & - & - \\
DSTAGNN & 21.89 & 34.77 & 14.46 & 23.90 & 37.41 & 20.24 & 24.20 & 38.23 & 15.13 & - & - & - \\
D2STGNN & 17.91 & 29.59 & 11.60 & 20.78 & 33.72 & \blueval{15.11} & - & - & - & - & - & - \\
\hdashline
STID & 17.62 & \blueval{29.02} & \blueval{11.53} & 20.28 & 33.55 & 16.46 & 19.79 & 34.02 & 12.33 & 18.40 & 31.86 & 13.86 \\
BigST & 18.86 & 31.80 & 13.00 & 22.02 & 35.62 & 18.59 & 22.16 & 36.08 & 14.62 & 20.39 & 33.54 & 15.97 \\
RPMixer & 25.31 & 42.65 & 17.70 & 27.86 & 47.81 & 23.91 & 27.94 & 49.08 & 17.72 & 25.16 & 44.88 & 19.55 \\

\hdashline
STAEformer & 19.02 & 31.78 & 12.65 & 21.30 & 34.56 & 17.63 & - & - & - & - & - & - \\
STWave & 18.28 & 30.18 & 12.25 & 20.87 & 33.85 & 15.82 & 21.02 & 33.59 & 12.75 & 19.75 & 31.67 & 14.64 \\
FaST & 21.89 & 38.15 & 15.03 & 25.50 & 46.31 & 20.46 & 25.35 & 44.65 & 15.80 & 23.49 & 41.73 & 17.55 \\
PatchSTG & \blueval{17.40} & 29.55 & 12.13 & \blueval{19.67} & \blueval{33.38} & 15.14 & \blueval{18.90} & \blueval{32.09} & \blueval{11.55} & \blueval{17.75} & \blueval{30.11} & \blueval{13.77} \\
\midrule
\textbf{SqLinear} & \redval{16.48} & \redval{28.32} & \redval{10.71} & \redval{19.31} & \redval{32.77} & \redval{14.38} & \redval{18.84} & \redval{31.99} & \redval{11.29} & \redval{17.44} & \redval{29.95} & \redval{12.48} \\
\bottomrule
\end{tabular}}
\end{table*}

\subsubsection{Topology-First Road Grouping.}
When road segments are short and uniformly structured, or when topological coherence is prioritized over geometric proximity, SqLinear can bypass coordinates entirely and construct patches purely based on road affiliation. The overall process that replaces Eq.~\ref{BFS} is formulated as follows:
\begin{equation}
\mathrm{Index} = \mathrm{GroupBy}(id),
\end{equation}
where $\mathrm{GroupBy}$ assigns all nodes sharing the same road \textit{id} into one group without any further geometric subdivision. This strategy minimizes structural disruption along the same road and preserves the inherent topological continuity.

%% file: sections/5_Experiment.tex
\section{Experiments}
\label{sec:Experiments}
\setlength{\textfloatsep}{6pt plus 1pt minus 2pt}
\setlength{\floatsep}{6pt plus 1pt minus 2pt}
\setlength{\intextsep}{6pt plus 1pt minus 2pt}
\setlength{\dbltextfloatsep}{6pt plus 1pt minus 2pt}
\setlength{\dblfloatsep}{6pt plus 1pt minus 2pt}

\begin{figure*}[!tb]
    \centering
    \includegraphics[width=0.96\textwidth]{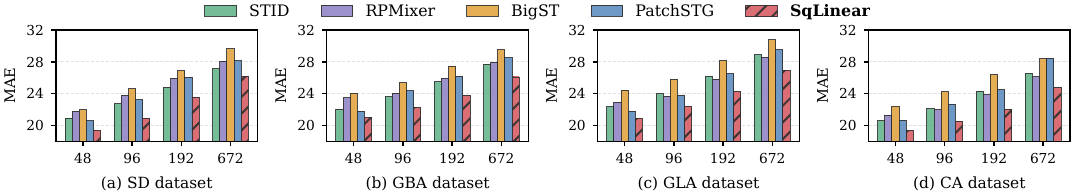}
    \caption{Performance comparison under long-horizon forecasting settings.}
    \label{fig:temporal_scalability}
\end{figure*}

We evaluate SqLinear through the following research questions:

\noindent\textbf{RQ1 (Effectiveness):} How does SqLinear perform compared with state-of-the-art baselines under standard forecasting settings?

\noindent\textbf{RQ2 (Generalization):} Can SqLinear maintain strong performance under substantially extended forecasting horizons?

\noindent\textbf{RQ3 (Efficiency):} How efficient and scalable is SqLinear in terms of computational cost and resource consumption?

\noindent\textbf{RQ4 (Ablation):} What is the contribution of each key component?

\noindent\textbf{RQ5 (Sensitivity):} How sensitive is SqLinear to different hyperparameter settings?

\noindent\textbf{RQ6 (Partition Analysis):} How do different spatial partitioning strategies affect performance?

\noindent\textbf{RQ7 (Case Study):} What insights can be obtained from the spatial dependencies learned by SqLinear?

\subsection{Experimental Setup}

We conduct extensive experiments on four large-scale traffic forecasting benchmarks. SqLinear is compared against eleven representative state-of-the-art baselines covering graph-based, transformer-based, and linear forecasting paradigms. Detailed descriptions of the \textbf{datasets}, \textbf{data preprocessing procedures}, \textbf{evaluation protocols}, \textbf{implementation details}, and \textbf{baseline configurations} are provided in \appref{app:experimental-setup}.

\subsection{Overall Performance Comparison (RQ1)}
\label{Main Results}
%\textbf{Overall Performance.} 
Table~\ref{exp:main1} reports the forecasting results under the standard 12-step setting. \textbf{SqLinear consistently achieves the best performance across all datasets and evaluation metrics}, demonstrating that its scalable design does not compromise predictive accuracy. Compared with the strongest competing method on each dataset, SqLinear reduces MAE by an average of 2.30\% and MAPE by 5.89\%. Specifically, we have the following main observations.

\textbf{(i) Comparison with Existing Forecasting Paradigms.} Table~\ref{exp:main1} reports three paradigms: graph-based models, attention-based models, and linear architectures. While graph-based methods (e.g., D2STGNN and DSTAGNN) achieve competitive accuracy by explicitly modeling sensor interactions, several of them fail to scale to the larger datasets due to the high cost of graph propagation. Attention-based methods improve representation capacity but often incur substantial computational overhead as the number of sensors increases. Linear models, such as STID and BigST, offer superior efficiency, yet their simplified architectures may limit their ability to capture complex spatial dependencies. \textbf{SqLinear combines the strengths of these paradigms by employing scalable spatial partitioning and hierarchical linear interaction, achieving both superior forecasting accuracy and scalability}.

\textbf{(ii) Benefits of Spatial Partitioning.} A notable observation is that \textbf{methods incorporating spatial patching outperform conventional graph- or sequence-based architectures} on large-scale datasets. This trend suggests that appropriately organizing spatial sensors into structured local regions is crucial for large-scale traffic forecasting. By generating balanced and compact patches, the proposed Square Partition preserves local spatial correlations while avoiding the irregular structures and excessive padding introduced by existing partitioning strategies. This provides a more effective foundation for subsequent spatio-temporal modeling.

\textbf{(iii) Benefits of Hierarchical Linear Interaction.} Among all methods, PatchSTG achieves the most competitive results, highlighting the effectiveness of patch-level spatial modeling. Nevertheless, \textbf{SqLinear consistently outperforms PatchSTG}. This improvement demonstrates that, given a well-structured spatial partition, hierarchical linear interactions are sufficient to propagate long-range spatial dependencies without relying on expensive attention operations. The results suggest that the combination of structured partitioning and lightweight interaction mechanisms offers a promising direction for scalable traffic forecasting.

%\subsection{Scalability Study (RQ2)}

\subsection{Long-Horizon Generalization (RQ2)}
\label{sec:temporal_scalability}

To evaluate the robustness of SqLinear under extended forecasting horizons, we increase the prediction length from the standard 12-step setting to 48, 96, 192, and 672 steps while using 96 historical observations as input. Fig.~\ref{fig:temporal_scalability} compares SqLinear with representative scalability-oriented baselines across four large-scale datasets. As observed, SqLinear consistently achieves the lowest MAE in all 16 dataset--horizon combinations. Compared with the strongest competing method in each setting, SqLinear reduces MAE by an average of 3.17\% and by up to 6.78\%. Due to space limitations, detailed results in terms of MAE, RMSE, and MAPE are provided in \appref{app:scalability-metrics}.

A notable observation is that the performance gap generally remains stable as the forecasting horizon increases. This result suggests that \textbf{SqLinear does not merely fit short-term traffic dynamics but can effectively capture long-range spatio-temporal dependencies}. We attribute this robustness to two key factors. First, Square Partition organizes sensors into compact and balanced spatial regions, providing a stable spatial representation that remains effective across different prediction horizons. Second, the hierarchical linear interaction mechanism combines global inter-patch communication with per-node local refinement, enabling information propagation across multiple spatial scales without introducing the optimization challenges and computational overhead of attention-based architectures. Overall, the results demonstrate that SqLinear generalizes effectively from short- to long-horizon forecasting scenarios, making it suitable for large-scale traffic forecasting tasks requiring both accuracy and temporal robustness.

\subsection{Efficiency and Scalability Evaluation (RQ3)}
\label{sec:efficiency}

\begin{figure}[!tbp]
\captionsetup[subfigure]{labelfont=normal, textfont=normal}
    \centering
    \begin{subfigure}{0.45\linewidth}
        \includegraphics[width=\linewidth]{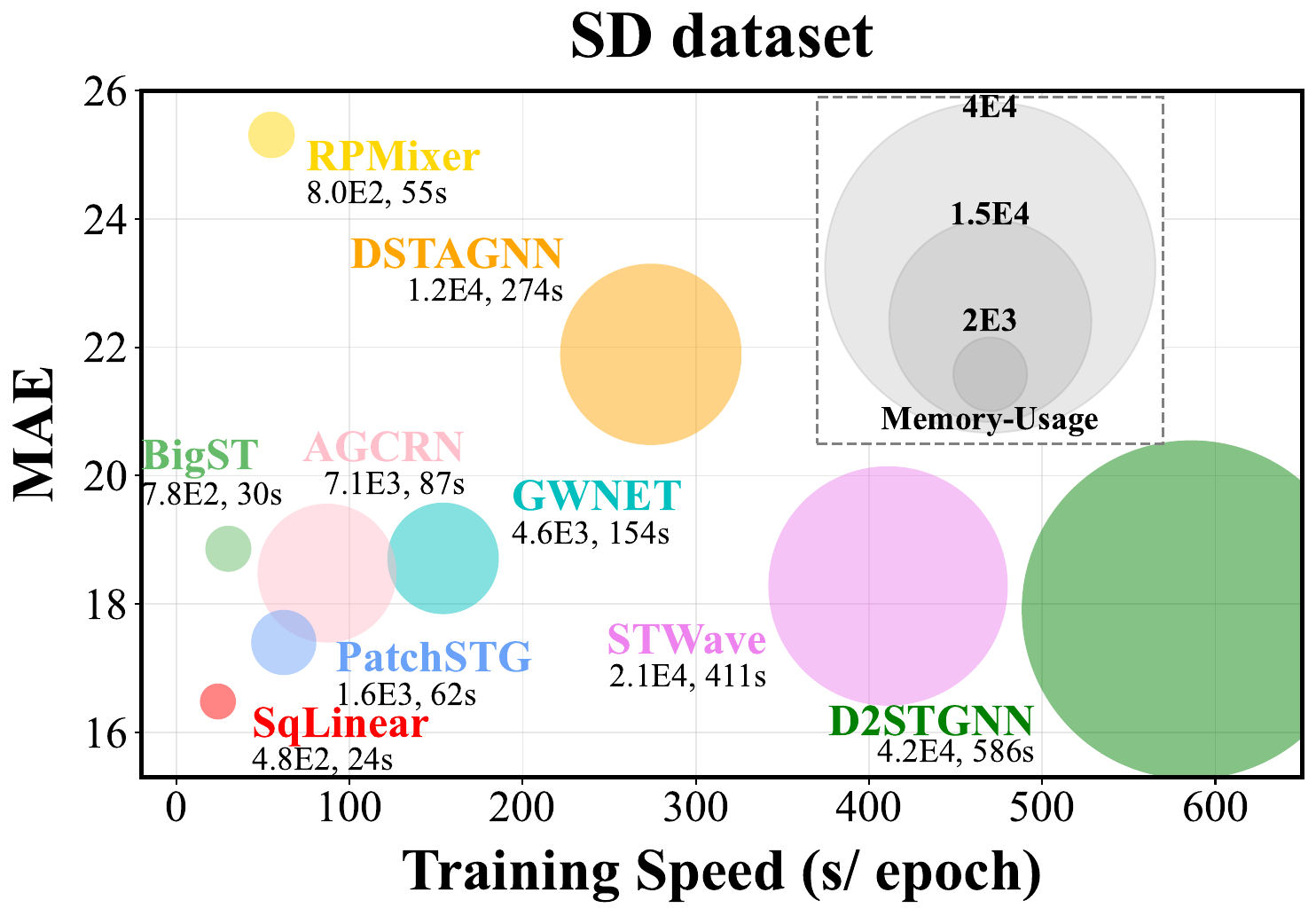}
        \caption{SD dataset}
    \end{subfigure}
    \hfill
    \begin{subfigure}{0.45\linewidth}
        \includegraphics[width=\linewidth]{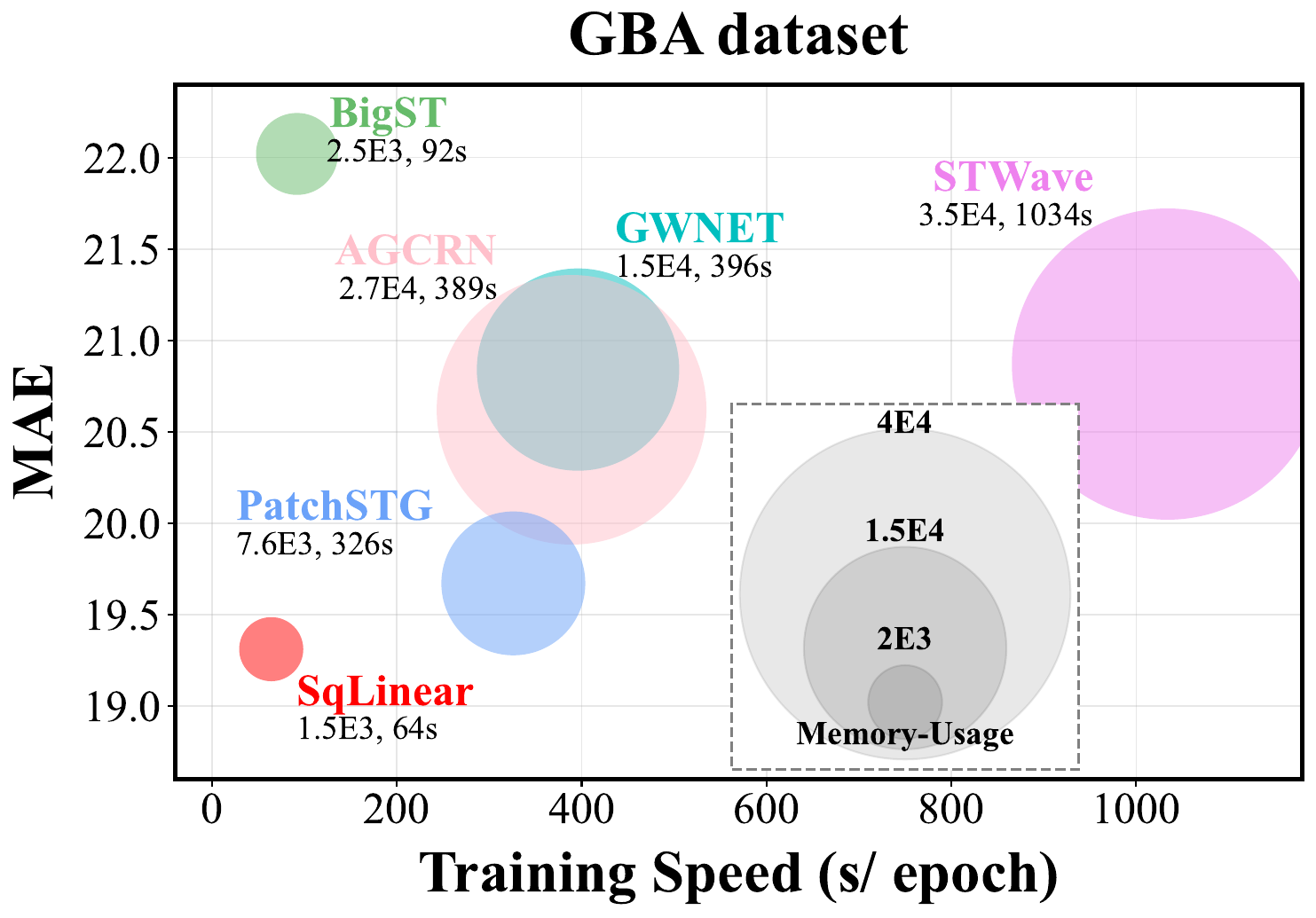}
        \caption{GBA dataset}
    \end{subfigure}
    \vspace{-0.3cm}
    \caption{Accuracy-efficiency bubble plots under the standard setting. The x-axis reports training runtime, the y-axis reports MAE, and bubble size indicates peak GPU memory.}
    \label{fig:bubble_diagram}
\end{figure}

%\textbf{Accuracy-Efficiency Trade-off.} %We examine efficiency through the accuracy-efficiency trade-off and practical resource cost. Under the standard setting, Fig.~\ref{fig:bubble_diagram} summarizes forecasting accuracy, training runtime, and peak GPU memory on SD and GBA. SqLinear lies in a favorable region of the bubble plots: it achieves low MAE with low runtime and modest memory consumption. This supports the claim that its accuracy gains are not obtained by simply increasing computational cost.
%We first evaluate whether the accuracy gains of SqLinear come at the expense of increased computational cost. 
Fig.~\ref{fig:bubble_diagram} summarizes accuracy, runtime, and peak GPU memory under the standard setting. As observed, SqLinear consistently occupies a favorable region of the accuracy-efficiency spectrum, achieving the lowest forecasting error while maintaining low runtime and moderate memory consumption. These results indicate that the performance improvements of SqLinear are not obtained through excessive model complexity or resource usage.

\begin{figure}[!tbp]
    \centering
    \includegraphics[width=0.9\linewidth]{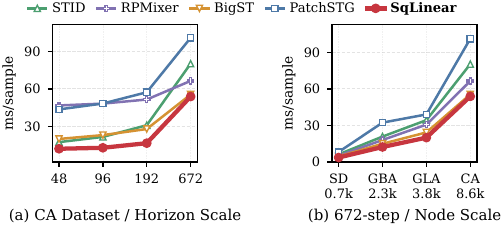}
    \vspace{-0.1cm}
    \caption{Per-sample training time comparison.}
    \label{fig:temporal_train_speed_lines}
\end{figure}

%\textcolor{blue}{\textbf{Speed Scaling.} Fig.~\ref{fig:temporal_train_speed_lines} compares long-horizon training efficiency using per-sample training time. On the largest CA dataset, SqLinear achieves the lowest training time at all four prediction lengths and reduces training time by 46.78\% on average relative to the most efficient baseline in each setting. Under the longest 672-step horizon, SqLinear again ranks first on every dataset and reduces training time by 17.04\% on average. These results show that SqLinear maintains state-of-the-art training efficiency when scaling along both prediction horizon and sensor-node count.}

\textbf{Scaling Behavior.} We further investigate how the computational cost of SqLinear evolves when scaling along two key dimensions: forecasting horizon and sensor-node count. Fig.~\ref{fig:temporal_train_speed_lines} reports the per-sample training time under long-horizon forecasting settings. On the largest CA dataset, SqLinear achieves the shortest per-sample training time across all prediction horizons and reduces per-sample training time by 46.78\% on average compared with the most efficient baseline in each setting. Under the most challenging 672-step forecasting scenario, SqLinear consistently ranks first across all datasets and achieves an average speedup of 17.04\%. These results demonstrate that the proposed Square Partition and Hierarchical Linear Interaction effectively control computational growth as both temporal and spatial scales increase. In particular, the partition-based design avoids expensive dense interactions among all sensor nodes, enabling efficient information aggregation while maintaining strong predictive performance.

\begin{table}[!tbp]
\centering
\caption{\label{Table:Efficiency_horizon}Computational cost on the largest CA dataset.}
\setlength{\tabcolsep}{2pt}
\resizebox{0.95\linewidth}{!}{
\begin{tabular}{l|ccc|ccc|ccc|ccc}
\toprule
\multirow{2}{*}{\textbf{Method}} &
    \multicolumn{3}{c|}{\textbf{48-step}} &
    \multicolumn{3}{c|}{\textbf{96-step}} &
    \multicolumn{3}{c|}{\textbf{192-step}} &
    \multicolumn{3}{c}{\textbf{672-step}} \\
\cmidrule(lr){2-4}\cmidrule(lr){5-7}\cmidrule(lr){8-10}\cmidrule(lr){11-13}
& B. & G. & P. & B. & G. & P. & B. & G. & P. & B. & G. & P. \\
\midrule
STID & 64 & 7.0 & 0.77 & 64 & 7.8 & 0.78 & 64 & 9.9 & 0.79 & 64 & 22.2 & 0.87 \\
BigST & 64 & 20.7 & 0.36 & 64 & 20.8 & 0.37 & 64 & 21.0 & 0.39 & 64 & 31.5 & 0.52 \\
RPMixer & 64 & 23.8 & 7.81 & 64 & 24.4 & 7.83 & 64 & 25.6 & 7.85 & 64 & 36.5 & 7.99 \\
FaST & 64 & 14.6 & 1.31 & 64 & 15.5 & 1.32 & 64 & 17.4 & 1.34 & 64 & 28.7 & 1.47 \\
PatchSTG & 32 & 26.9 & 1.85 & 32 & 27.3 & 1.86 & 32 & 28.3 & 1.87 & 32 & 34.5 & 1.95 \\
\midrule
\textbf{SqLinear} & 64 & 25.2 & 1.23 & 64 & 25.8 & 1.24 & 64 & 27.7 & 1.25 & 64 & 31.4 & 1.06 \\
\bottomrule
\end{tabular}}
\end{table}

\begin{table}[!tbp]
\centering
\caption{\label{Table:Efficiency_dataset}Computational cost for longest 672-step prediction.}
\setlength{\tabcolsep}{2pt}
\resizebox{0.95\linewidth}{!}{
\begin{tabular}{l|ccc|ccc|ccc|ccc}
\toprule
\multirow{2}{*}{\textbf{Method}} &
\multicolumn{3}{c|}{\textbf{SD}} &
\multicolumn{3}{c|}{\textbf{GBA}} &
\multicolumn{3}{c|}{\textbf{GLA}} &
\multicolumn{3}{c}{\textbf{CA}} \\
\cmidrule(lr){2-4}\cmidrule(lr){5-7}\cmidrule(lr){8-10}\cmidrule(lr){11-13}
& B. & G. & P. & B. & G. & P. & B. & G. & P. & B. & G. & P. \\
\midrule
GWNET & 32 & 28.8 & 0.89 & - & - & - & - & - & - & - & - & - \\
AGCRN & 32 & 38.9 & 0.80 & - & - & - & - & - & - & - & - & - \\
DSTAGNN & 16 & 23.5 & 6.22 & - & - & - & - & - & - & - & - & - \\
\hdashline
STID & 64 & 1.9 & 0.37 & 64 & 6.1 & 0.47 & 64 & 9.9 & 0.57 & 64 & 22.2 & 0.87 \\
BigST & 64 & 2.7 & 0.26 & 64 & 8.8 & 0.32 & 64 & 14.4 & 0.36 & 64 & 31.5 & 0.52 \\
RPMixer & 64 & 3.1 & 1.69 & 64 & 10.0 & 2.47 & 64 & 16.3 & 3.46 & 64 & 36.5 & 7.99 \\

\hdashline
STAEformer & 4 & 28.4 & 16.34 & - & - & - & - & - & - & - & - & - \\
FaST & 64 & 3.4 & 2.08 & 64 & 11.0 & 2.35 & 64 & 12.8 & 1.01 & 64 & 28.7 & 1.47 \\
PatchSTG & 64 & 6.6 & 2.42 & 64 & 28.7 & 3.29 & 64 & 25.1 & 1.80 & 32 & 34.5 & 1.95 \\
\midrule
\textbf{SqLinear} & 64 & 2.4 & 0.59 & 64 & 9.1 & 0.71 & 64 & 15.0 & 0.77 & 64 & 31.4 & 1.06 \\
\bottomrule
\end{tabular}}
\end{table}

%\textcolor{blue}{\textbf{Resource Cost.} Tables~\ref{Table:Efficiency_horizon} and~\ref{Table:Efficiency_dataset} summarize the resource consumption of the same long-horizon models using separated cost fields, where B., G., and P. denote batch size, GPU memory usage in GB, and the number of parameters in M, respectively; the ``-'' marker indicates out-of-memory results. This layout makes the trade-offs easier to inspect: SqLinear is not always the smallest model in parameter count, but its parameters stay at or below 1.25M across all reported long-horizon settings. Its memory footprint remains controlled under the most demanding setting: at 672-step forecasting on CA, SqLinear uses 31.4GB, lower than PatchSTG and RPMixer and close to BigST while using a larger batch size than PatchSTG. Together with the bubble plots and runtime trends, these measurements show that SqLinear offers a favorable accuracy-efficiency trade-off under both standard and extended temporal settings.}

\textbf{Resource Consumption.} Tables~\ref{Table:Efficiency_horizon} and~\ref{Table:Efficiency_dataset} provide a detailed comparison of resource consumption under long-horizon forecasting settings, where B., G., and P. denote batch size, GPU memory usage (GB), and parameter count (M), respectively. The ``-'' marker indicates out-of-memory results. Although SqLinear is not always the smallest model in terms of parameter count, its model size remains compact, requiring no more than 1.25M parameters across all evaluated settings. More importantly, its memory consumption remains well controlled under large-scale scenarios. For example, under 672-step forecasting on the CA dataset, SqLinear requires 31.4GB of GPU memory, which is lower than PatchSTG and RPMixer, while supporting a larger batch size than PatchSTG. This observation suggests that the proposed architecture utilizes computational resources more effectively than competing approaches.

Overall, the runtime, memory, and scalability analyses consistently show that SqLinear achieves a favorable accuracy-efficiency trade-off. The results confirm that the proposed partition-based linear framework scales gracefully to large traffic networks and long forecasting horizons without sacrificing predictive accuracy.

\subsection{Ablation Study (RQ4)}
\label{sec:ablation}
We perform ablation studies on the GBA and CA datasets. Similar results were observed on other datasets. We assess the contribution of key design choices by comparing the full SqLinear model with eight variants: (1)~\textbf{w/ Grid}, (2) \textbf{w/ Quadtree}, and (3) \textbf{w/ K-D Tree} replace square partitioning with Grid, Quadtree, and K-D Tree structures (using leaf padding); (4)~\textbf{w/ AlterAxis} uses alternating axes instead of Eq.~\ref{splitaxis}; (5) \textbf{w/ MedianSplit} uses the median split point instead of Eq.~\ref{splitpoint}; (6)~\textbf{w/o Inter} and (7) \textbf{w/o Intra} remove the inter-patch and intra-patch branches, respectively; (8)~\textbf{w/ Attn} replaces the FeatureMixer with standard attention. As shown in Fig.~\ref{exp:Ablation}, we make the following key observations.

\begin{figure}[!tb]
\captionsetup[subfigure]{labelfont=normal, textfont=normal}
    \centering
    \begin{subfigure}{0.42\linewidth}
        \includegraphics[width=\linewidth]{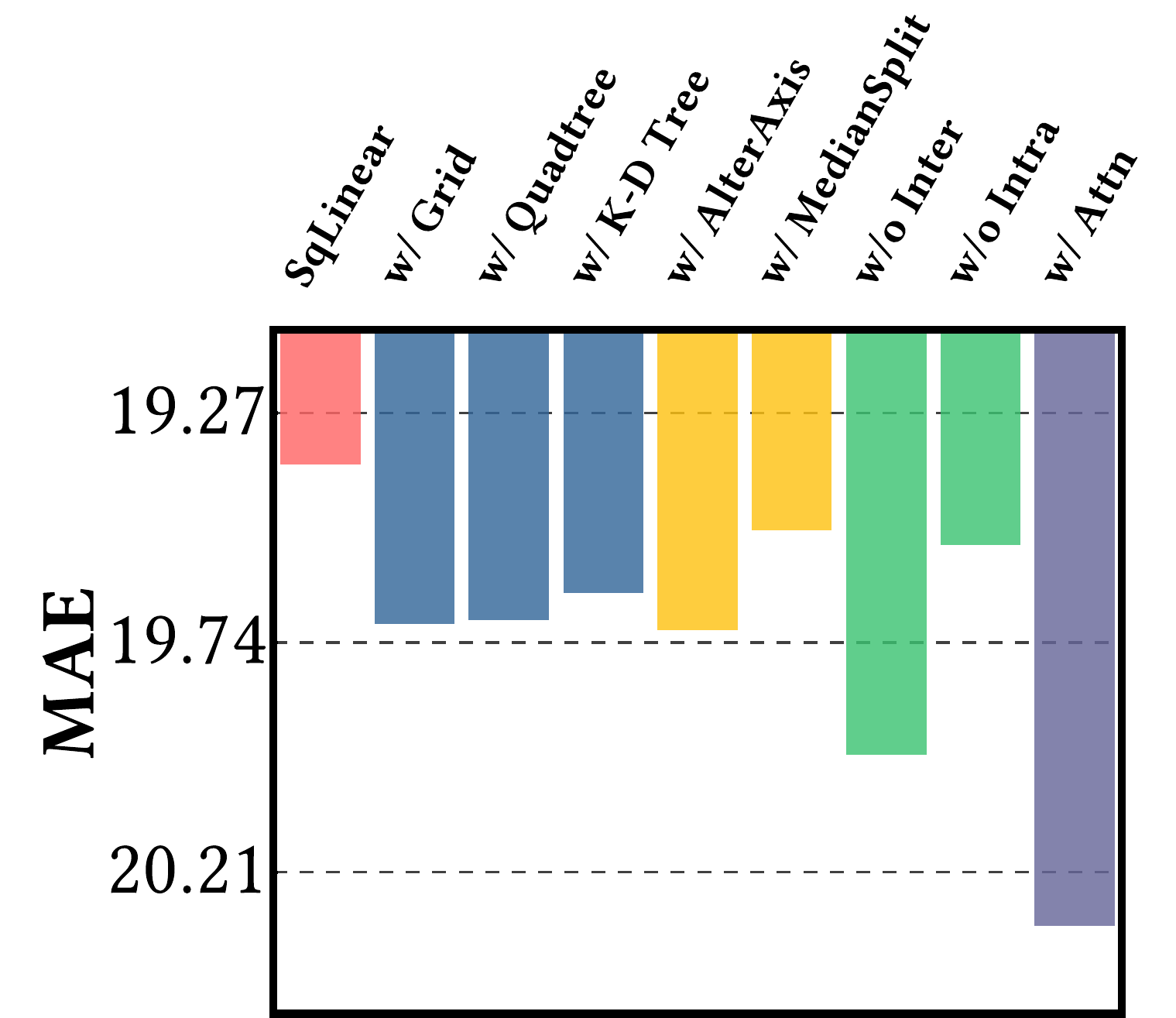}
        \caption{GBA dataset}
    \end{subfigure}
    \hfill
    \begin{subfigure}{0.42\linewidth}
        \includegraphics[width=\linewidth]{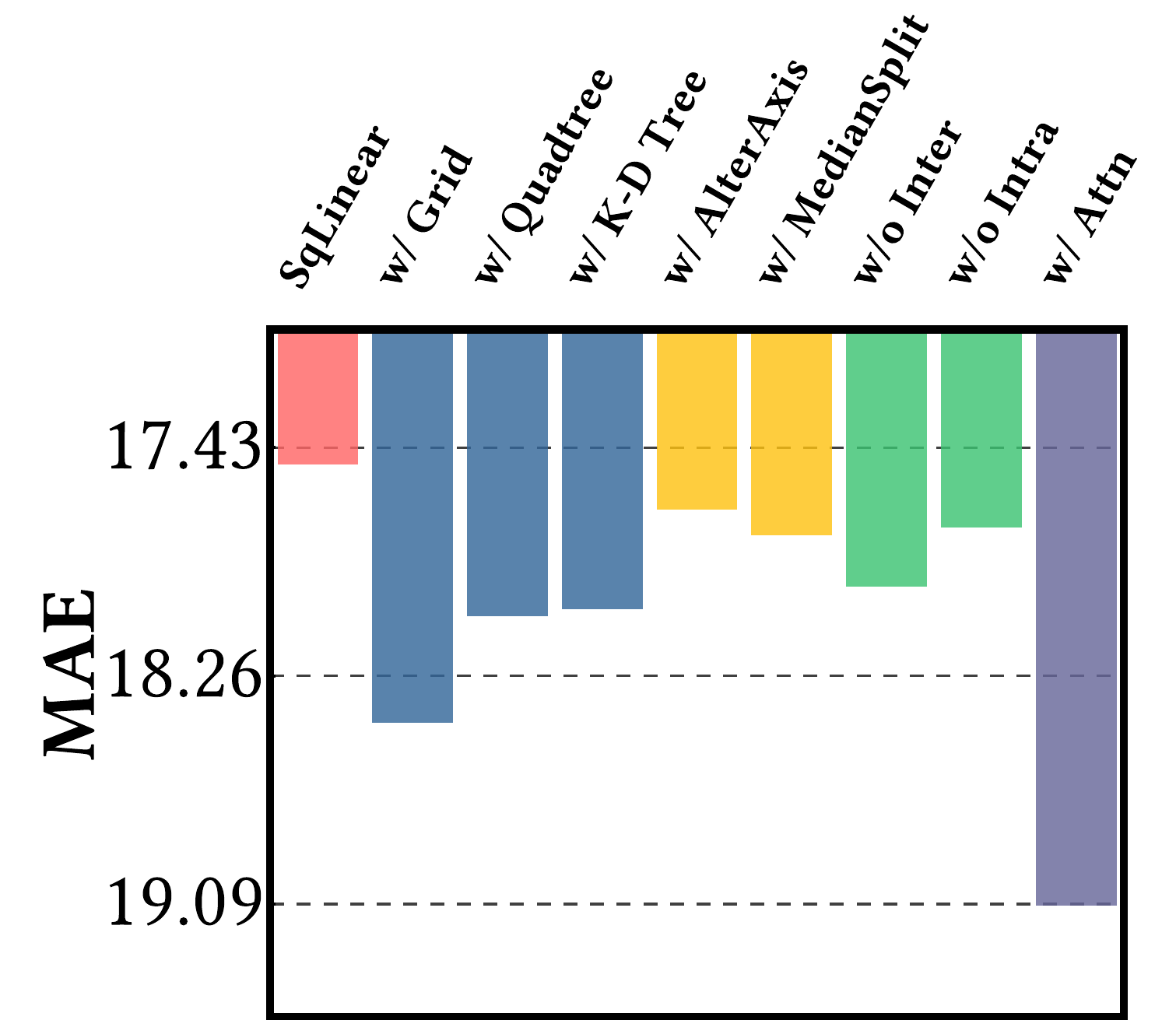}
        \caption{CA dataset}
    \end{subfigure}
    \vspace{-0.3cm}
    \caption{Ablation study.}
    \label{exp:Ablation}
    \vspace{-0.1cm}
\end{figure}

\textbf{Effectiveness of Square Partition}. For a fair comparison, the K-D Tree variant adopts the same leaf capacity $C$ as Square Partition. Since Grid and Quadtree do not have an explicit leaf capacity, we tune their grid resolution and tree depth such that the number of samples per leaf cell is approximately comparable to $C$. Experimental results validate our partitioning strategy from multiple perspectives. First, conventional spatial partitioning methods, including Grid, Quadtree, and K-D Tree, lead to substantial performance degradation. This is mainly due to the generation of imbalanced or geometrically irregular patches, where excessive padding with artificial entries may dilute feature representations. Moreover, the inferior performance of AlterAxis and MedianSplit further demonstrates the importance of our design choices in selecting both the splitting axis and split point, which are crucial for producing balanced patch indices and enabling padding-free tensorization under a fixed capacity constraint.

\textbf{Effectiveness of Hierarchical Linear Interaction}. Performance drops observed in w/o Inter and w/o Intra demonstrate the necessity of both branches: removing inter-patch communication cuts cross-patch information exchange, while removing intra-patch refinement removes the nonlinearity that integrates this exchange into each node. In addition, the w/ Attn variant not only fails to outperform HLI but also introduces additional computational overhead. These results suggest that, given well-structured partitioned inputs, a lightweight linear architecture is sufficient for modeling spatio-temporal dependencies, whereas more complex attention-based designs may incur unnecessary computational cost without performance gains.

\subsection{Additional Experiments (RQ5, RQ6, RQ7)}
\label{sec:additional_analyses}

Further analyses are deferred to the appendix, where we study hyperparameters in \appref{app:Hyper-parameter}, visualize and compare spatial partitioning strategies in \appref{sec:partition_strategy}, and present a case study of learned spatial dependencies in \appref{app:case-study}.

%% file: sections/6_Conclusion.tex
\section{Conclusion}
\label{sec:CONCLUSION}
We presented SqLinear, a novel linear architecture designed to improve the accuracy and efficiency of traffic forecasting. By employing adaptive square partitioning with hierarchical linear interaction, SqLinear captures complex spatiotemporal dependencies while avoiding the quadratic overhead of existing methods.

%% file: Appendix/4_RelatedWork.tex
\section{Related Work}
\label{sec:Related Work}
We cover related studies on spatio-temporal traffic prediction and commonly used patching strategies in deep learning.

\subsection{Spatio-Temporal Prediction}

Spatio-temporal traffic prediction aims to forecast traffic flow, speed, and occupancy. Early approaches primarily relied on statistical models, including moving average models~\cite{durbin1959efficient}, AutoRegressive Integrated Moving Average (ARIMA)~\cite{box1970distribution}, and multivariate regression~\cite{alexopoulos2010introduction}. To model non-linear and complex spatio-temporal dependencies, numerous deep learning-based spatio-temporal prediction methods~\cite{yao2025leveraging,luo2025towards,kudrat2025patch,zhao2026fast,zheng2025st} have emerged. 

We proceed to highlight the most relevant state-of-the-art methods, while more categories of existing spatio-temporal prediction models are covered by surveys~\cite{wang2020deep,jin2023spatio,wang2024deep}. In general, these methods employ sequence-oriented architectures (e.g., RNNs, LSTMs) to capture temporal correlations and convolutional architectures (e.g., CNNs, GNNs) to capture spatial dependencies. Spatio-Temporal Graph Neural Networks (STGNNs) represent the dominant paradigm, combining graph-based spatial modeling with temporal processing modules. These networks employ architectures that generally fall into two categories: recurrent-based approaches, such as DCRNN~\cite{DBLP:conf/iclr/LiYS018} and DGCRN~\cite{li2023dynamic}, that use RNN variants to model temporal dynamics, and convolutional-based methods, such as GWNET~\cite{wu2019graph} and AGCRN~\cite{bai2020adaptive}, that use temporal convolutional networks. 

Inspired by the success of the Transformer architecture~\cite{vaswani2017attention}, researchers have explored attention-based spatio-temporal prediction models, such as ASTGCN~\cite{guo2019attention} and STWave~\cite{fang2023spatio}, to improve prediction accuracy. These improvements come at high computational costs, with graph convolution operations typically having quadratic complexity in terms of node count, and with the attention mechanisms introducing high complexity. 

To improve model scalability, simplified architectures have been explored~\cite{li2025unifying,li2026minitraffic}. MLP-based models such as STID~\cite{shao2022spatial} and RPMixer~\cite{yeh2024rpmixer}, demonstrate that carefully designed multilayer perceptrons can achieve competitive performance, while linear architectures like BigST~\cite{han2024bigst} and GSNet~\cite{GraphSparseNet} show that properly normalized linear projections can capture spatio-temporal patterns effectively. FaST~\cite{zhao2026fast} further targets efficient long-horizon forecasting on large-scale spatial-temporal graphs through a mixture-of-experts architecture with adaptive graph-agent attention. However, these efficiency gains often come at the expense of explicit spatial reasoning, particularly for modeling non-local dependencies that are critical in transportation networks. %\textit{Unlike previous studies that primarily focus on improving model accuracy, we propose a new architecture that preserves the spatial awareness of graph-based methods while achieving the efficiency of streamlined prediction models.}
\textbf{\textit{As shown in Table~\ref{Related:comp}, compared with previous efficiency- or scalability-oriented methods, the proposed SqLinear method supports large-scale inputs, provides spatial management, and includes scalability-oriented theoretical analysis while achieving linear complexity.}}

The success of large language models (LLMs) has inspired another line of studies. Foundation models such as Unist~\cite{10.1145/3637528.3671662} and UrbanGPT~\cite{10.1145/3637528.3671578} attempt to adapt LLM architectures to spatio-temporal settings, while hybrid systems like ST-LLM+~\cite{11005661} combine traditional forecasting modules with Transformer backbones. As covered comprehensively~\cite{fang2025unravelingspatiotemporalfoundationmodels}, these approaches represent promising but preliminary efforts toward general-purpose spatio-temporal learning, which is beyond the scope of this paper.

\subsection{Patching Strategies in Deep Learning}

The partitioning of complex and massive data sets has emerged as a fundamental paradigm for enhancing computational efficiency of machine learning models~\cite{park2022vision,xie2021self,zhang2022patchformer,liang2024foundation}. Patching strategies vary across data modalities~\cite{zhong2025mtm}. For regularly structured data like images and video, fixed-size spatial aggregation dominates, as exemplified by Vision Transformers~\cite{dosovitskiy2020image} that decompose images into uniform patches and by Swin Transformers~\cite{liu2021swin} that employ hierarchical patch merging. 

The methods PatchTST~\cite{nietime} and Pathformer~\cite{chenpathformer} develop patch-based strategies for time series forecasting. However, these methods patch only in the time dimension and lack spatial modeling capabilities. Further, they are designed for small-scale time series data as opposed to large-scale spatio-temporal data processing. In contrast, irregular spatial data, particularly from traffic sensor networks, present unique spatio-temporal challenges due to the heterogeneous spatial layouts and skewed node distributions of the networks~\cite{xie2021statistically}. LISA~\cite{an2024lisa} introduces a partitioning framework that iteratively learns sub-regions based on prediction accuracy. However, its expansion-based process is computationally intensive and is primarily optimized for sparse accident prediction scenarios, which limits its applicability to general traffic forecasting tasks. To contend with general irregular graphs, PatchGT~\cite{gao2022patchgt} utilizes non-trainable spectral clustering to segment graphs. However, this approach often fails to adapt to dynamic data distributions and evolving spatial patterns. More recent methods such as PatchSTG~\cite{fang2024efficient} adopt leaf-level K-D Tree partitioning to balance node distributions across partitions. While effective at mitigating node imbalance, K-D Trees come with inherent intrinsic geometric limitations, including unavoidable patch elongation and mandatory padding operations. Such artifacts introduce additional computational overhead and spatial distortion, as illustrated in Fig.~\ref{intro:model}. In contrast to existing studies~\cite{fang2024efficient,10.1145/355744.355745,moti2022waffle}, Square Partition is tailored to traffic forecasting rather than general spatial indexing: it (i) uses capacity-aware split positions to control leaf sizes, (ii) uses coordinate-adaptive axis selection to encourage compact spatial patches, and (iii) produces padding-free patch representations for downstream spatio-temporal modeling.

%% file: Appendix/1_Algorithm.tex
\section{Algorithm}
\label{app:alg}

Algorithm~\ref{alg:square-partition} details the {Square Partition} algorithm. The algorithm begins by checking whether the input node set $N^l$ satisfies the base case, i.e., whether its size does not exceed a predefined capacity threshold $C$ (lines 2--4). If the base case is not satisfied, the algorithm selects the split axis (latitude or longitude) by comparing the spatial spans along each dimension (lines 5--11). It then calculates a capacity-aware split point for balanced recursive subdivision (line 13). Based on the split point, the node set $N^l$ is partitioned into two subsets, $N^{l+1}_1$ and $N^{l+1}_2$, and the algorithm is applied recursively to each subset. The final output is the set of the recursively partitioned results (lines 14--18).

\begin{algorithm}[H]
\caption{The Square Partition Algorithm}
\label{alg:square-partition}
\begin{algorithmic}[1]
\Require{
    nodeset $N^l$: nodes with coordinates.
    LeafCapacity $C$: Number of nodes per leaf patch.
}
\Ensure{
    $P$: a set of patches.
}
\Function{SquarePartition}{$N^l$}
    \If{$|N^l| \le C$} \Comment{Node count is within capacity}
        \State \textbf{return} $N^l$
    \EndIf

    \Statex \Comment{\textit{\underline{Step 1}: Determine splitting axis}}
    \State $\Delta_{Lat} \gets \max_{n \in N^l}(Lat_{n}) - \min_{n \in N^l}(Lat_{n})$
    \State $\Delta_{Lng} \gets \max_{n \in N^l}(Lng_{n}) - \min_{n \in N^l}(Lng_{n})$
    \If{$\Delta_{Lng} \ge \Delta_{Lat}$} \Comment{Calculate SplitAxis}
        \State $SplitAxis \gets \text{Longitude}$
    \Else
        \State $SplitAxis \gets \text{Latitude}$
    \EndIf

    \State Sort $N^l$ based on the coordinates of $SplitAxis$.

    \Statex \Comment{\textit{\underline{Step 2}: Identify splitting position}}
    \State {$S_p \gets C\left\lceil\dfrac{|N^l|-C}{2C}\right\rceil$} \Comment{{Calculate split point}}

    \Statex \Comment{\textit{\underline{Step 3}: Partition and recurse}}
    \State $N^{l+1}_1 \gets N^l[1 \dots S_p]$ \Comment{$1st$ subset}
    \State $N^{l+1}_2 \gets N^l[S_p+1 \dots |N^l|]$ \Comment{$2nd$ subset}

    \State $P_1 \gets \Call{SquarePartition}{N^{l+1}_1}$ \Comment{Partition $1st$ set}
    \State $P_2 \gets \Call{SquarePartition}{N^{l+1}_2}$ \Comment{Partition $2nd$ set}

    \State \textbf{return} $P_1 \cup P_2$
\EndFunction
\end{algorithmic}
\end{algorithm}

Algorithm~\ref{alg:sqlinear} outlines the complete workflow of {SqLinear}, from initial data embedding to final prediction. The process begins with the {Spatio-Temporal Embedding Layer}, where a unified feature representation $\tilde{X}$ is constructed by generating and concatenating the historical embeddings $E_H$, temporal embeddings $E_T^d$ and $E_T^w$, and spatial embedding $E_S$. SquarePartition first produces node indices according to sensor coordinates and capacity $C$, and the Patching operation then groups the node representations in $\tilde{X}$ into the patch tensor $\bar{X}$ without modifying feature dimensions. At the core of SqLinear, the {Hierarchical Linear Interaction} block processes the data iteratively across $L$ layers. In each layer $l$, an {{Inter-Patch Communication}} module first processes $\bar{X}^{(l-1)}$ to capture dependencies across patches, resulting in $H_{\text{out}}^{(l)}$. Then, the {{Intra-Patch Refinement}} module {refines each node's representation with this context}, updating the patch representations to $\bar{X}^{(l)}$. Finally, the {Output Layer} reconstructs the full spatio-temporal graph $\tilde{Y}$ from the final-layer patch representations $\bar{X}^{(L)}$ via the {Unpatch} operation, and applies reverse convolution ({RConv}) to obtain the final prediction $Y_F$.

\begin{algorithm}[H]
\caption{The SqLinear Algorithm}
\label{alg:sqlinear}
\begin{algorithmic}[1]
\Require{input data $X_H$, nodeset $N^{0}$, leaf capacity $C$, number of layers $L$}
\Ensure{predicted traffic features $Y_F$}
\Statex \Comment{\textit{\underline{Spatio-Temporal Embedding Layer}}}
\State $E_H \gets \mathit{PConv}({[X_H \vert\vert \tau_d/T_d \vert\vert \tau_w/T_w]}; \Theta_h)$ \Comment{Historical embedding}
\State $E_T^d \gets W_{day} {[\tau_d]}$ \Comment{{Time}-of-day embeddings}
\State $E_T^w \gets W_{week}{[\tau_w]}$ \Comment{Day-of-week embedding}
\State $E_S \gets {W_s}$ \Comment{Spatial embedding}
\State $\tilde{X} \gets E_H \vert\vert E_T^{d} \vert\vert E_T^{w} \vert\vert E_S$ \Comment{Combine all embeddings}
\Statex \Comment{\textit{\underline{Spatial Partition}}}
\State $\mathit{Index} \gets \text{SquarePartition}(N^{0}, C)$ \Comment{Generate node grouping indices}
\State $\bar{X} \gets \text{Patching}(\mathit{Index}, \tilde{X})$ \Comment{Group node representations}
\Statex \Comment{\textit{\underline{Hierarchical Linear Interaction}}}
\For{$l \gets 1$ \textbf{to} $L$}
    \Statex \Comment{{Inter-Patch Communication}}
    \State $H_{\text{inter}}^{(l)} \gets {\text{FeatureMixer}}^{(l)}(\text{Norm}^{(l)}(\bar{X}^{(l-1)}))$
    \State $\bar{H}_{\text{inter}}^{(l)} \gets \text{LowRankProj}^{(l)}(H_{\text{inter}}^{(l)})$
    \State {$U^{(l)} \gets \bar{X}^{(l-1)} + \bar{H}_{\text{inter}}^{(l)}$}
    \State $H_{\text{out}}^{(l)} \gets {U^{(l)}} + \text{FFN}^{(l)}({\text{Norm}^{(l)}(U^{(l)})})$
    \Statex \Comment{{Intra-Patch Refinement}}
    \State $H_{\text{intra}}^{(l)} \gets {\text{FeatureMixer}}^{(l)}(\text{Norm}^{(l)}(H_{\text{out}}^{(l)}))$
    \State {$V^{(l)} \gets H_{\text{out}}^{(l)} + H_{\text{intra}}^{(l)}$}
    \State $\bar{X}^{(l)} \gets {V^{(l)}} + \text{FFN}^{(l)}({\text{Norm}^{(l)}(V^{(l)})})$
\EndFor
\Statex \Comment{\textit{\underline{Output Layer}}}
\State $\tilde{Y} \gets \text{Unpatch}({\mathit{Index}, }\bar{X}^{(L)})$ \Comment{Reconstruct from patches}
\State $Y_F \gets \text{RConv}(\tilde{Y}; \Theta_r)$ \Comment{Generate final prediction}
\State \textbf{return} $Y_F$
\end{algorithmic}
\end{algorithm}

%% file: Appendix/2_Theory.tex
\section{Theoretical Analysis}
\label{app:theory}

We provide a concise analysis of the properties that directly support scalable traffic forecasting. In the Square Partition approach, the key theoretical function is to generate capacity-bounded tensor units that achieve high utilization and controlled split imbalance, whereas compactness is treated as a distribution-dependent property—encouraged by the longest-span splitting rule and validated empirically. We then analyze the approximation capacity of the low-rank inter-patch projection, and conclude with the original complexity analysis.
\subsection{Analysis of Spatial Partition}
\label{subsubsec:partition_analysis}
We analyze Square Partition from the perspective of downstream tensor processing. A good spatial partition for SqLinear should (i) avoid padded or empty tokens, (ii) keep recursive splits balanced to avoid skewed patch workloads, and (iii) encourage compact spatial neighborhoods.

\begin{definition}[\textbf{Patch Utilization}]
\label{def:Patch Utilization}
{
Let $P$ be the number of generated leaf patches, let $C$ be the patch capacity, and let $N$ be the total number of sensor nodes. We define patch utilization as
\begin{equation}
\eta = \frac{N}{P \times C},
\end{equation}
where larger $\eta$ indicates fewer padded or unused tensor entries during patch-based modeling.
}
\end{definition}

\begin{theorem}[\textbf{Padding-Free Tensorization}]
\label{theorem:Patch Utilization}
{
{If $C$ divides $N$, Square Partition produces exactly $P=N/C$ leaf patches, each containing exactly $C$ nodes. Hence the patch tensor has no padded sensor entries and satisfies $\eta=1$. Otherwise, at most one leaf patch is under-filled, so at most $C-1$ padded entries are required in total.}
}
\end{theorem}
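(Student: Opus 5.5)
The proof is a strong induction on the size $n=|N^l|$ of the node set passed to \textsc{SquarePartition} (Algorithm~\ref{alg:square-partition}). The split axis of Step~1 plays no role, because only cardinalities matter. The invariant I will prove, for every $n\ge 1$, is: the recursion on $n$ nodes terminates and returns exactly $\lceil n/C\rceil$ leaf patches. All of them contain exactly $C$ nodes, except that when $C\nmid n$ exactly one leaf contains $n \bmod C$ nodes. Theorem~\ref{theorem:Patch Utilization} then follows by taking $n=N$. If $C\mid N$ we get $P=N/C$ full leaves and $\eta=N/(PC)=1$ by Definition~\ref{def:Patch Utilization}. Otherwise the single under-filled leaf needs $C-(N\bmod C)\le C-1$ padded entries.

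\textbf{Key lemma on the split point.} For $n>C$ I will show two things about the split point of Eq.~\ref{splitpoint}:
\begin{itemize}
\item $S_p$ is a positive multiple of $C$ with $C\le S_p<n$.
\item $n-S_p\equiv n \pmod C$.
\end{itemize}
Divisibility by $C$ is immediate from the form $S_p=C\lceil\cdot\rceil$. Since $(n-C)/(2C)>0$, the ceiling is at least $1$, so $S_p\ge C$. For the upper bound, use $\lceil x\rceil<x+1$ to get $S_p<C\bigl(\tfrac{n-C}{2C}+1\bigr)=\tfrac{n+C}{2}<n$, where the last step uses $n>C$. The congruence follows because $S_p\equiv 0 \pmod C$.

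So both children are nonempty and strictly smaller than $n$. This gives termination and makes the induction hypothesis applicable to each child. I expect this lemma to be the only delicate point: one must check that the ceiling never yields $S_p=0$ or $S_p=n$, since either would create an empty child and break the recursion.

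\textbf{Induction.} In the base case $1\le n\le C$, the node set is returned as a single leaf, and $\lceil n/C\rceil=1$. If $C\mid n$ this forces $n=C$, so the leaf is full. Otherwise the leaf has $n=n\bmod C$ nodes, which is the one allowed under-filled leaf.

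In the inductive step $n>C$, the algorithm splits into children of sizes $S_p$ and $n-S_p$.
\begin{itemize}
\item The first child has size $S_p$, a multiple of $C$. By the hypothesis it yields $S_p/C$ leaves, all full.
\item The second child has size $n-S_p\equiv n \pmod C$. By the hypothesis it yields $\lceil (n-S_p)/C\rceil$ leaves. All are full except at most one, which has size $n \bmod C$ and occurs exactly when $C\nmid n$.
\end{itemize}
The total leaf count is $S_p/C+\lceil (n-S_p)/C\rceil=\lceil n/C\rceil$, because $S_p/C$ is an integer. The first child contributes no under-filled leaf, so at most one under-filled leaf occurs overall. This re-establishes the invariant and completes the proof.
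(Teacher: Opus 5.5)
Your proof is correct and takes the same approach as the paper. The key fact is that $S_p$ is always a multiple of $C$, so the residue $N \bmod C$ always lands in the second child and propagates by induction down the recursion tree. Your added lemma $C \le S_p < n$ (no empty child, so the recursion terminates) and your explicit leaf count $\lceil n/C\rceil$ make rigorous two points that the paper's sketch leaves implicit.
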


\textit{Proof sketch.} {{By Eq.~\ref{splitpoint}, $S_p$ is an integer multiple of $C$ for every split. Hence if $N_n$ is a multiple of $C$, so are both $S_p$ and $N_n-S_p$, and the multiple-of-$C$ property propagates down the recursion. Starting from $C\mid N$, induction over the recursive partition tree gives that every terminal subset has size exactly $C$, so the leaf count is $N/C$ and $\eta=1$. Consequently, the patched representation $\bar{X}\in\mathbb{R}^{C\times P\times d}$ contains only real sensor nodes. When $C\nmid N$, writing $N=qC+m$ with $0<m<C$, the same argument confines the residual $m$ nodes to a single terminal subset, which is the only under-filled leaf.}}

\begin{definition}[\textbf{Node Balance Degree}]
\label{def:Node Balance Degree}
{
For a region containing $N$ traffic nodes, after a partitioning operation, let $L$ and $R$ ($L + R = N$) denote the number of nodes in the left and right subregions, respectively. The node balance degree $\delta$ is defined as follows.
\begin{equation}
\delta = |L - R|,
\end{equation}
where a smaller $\delta$ indicates a more balanced split. A large $\delta$ is undesirable because it creates skewed recursive trees, uneven patch workloads, and possible padding overhead.
}
\end{definition}

\begin{theorem}[\textbf{Split Balance}]
\label{theorem:Node Balance Degree}
{
In Square Partition, {for any region with $N_n>C$,} the node balance degree of each split satisfies
\begin{equation}
{\delta \le C,}
\end{equation}
where $C$ is the predefined patch capacity{, and the bound is tight}.
}
\end{theorem}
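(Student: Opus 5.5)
The plan is a direct computation. For a region with $N_n>C$ nodes, the split of Eq.~\ref{splitpoint} sends $L=S_p$ nodes to the first child and $R=N_n-S_p$ nodes to the second. Hence
\[
\delta=|L-R|=|2S_p-N_n|.
\]
Write $k=\left\lceil\frac{N_n-C}{2C}\right\rceil$, so that $S_p=Ck$. Everything then reduces to the elementary sandwich inequality for the ceiling function,
\[
\frac{N_n-C}{2C}\le k<\frac{N_n-C}{2C}+1 .
\]

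First I multiply this inequality through by $2C>0$. This gives $N_n-C\le 2Ck<N_n+C$. Subtracting $N_n$ yields $-C\le 2S_p-N_n<C$, and therefore $\delta=|2S_p-N_n|\le C$.

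Second, I check that the split is a genuine bipartition, so that $\delta$ is well defined as in Definition~\ref{def:Node Balance Degree}. Since $N_n>C$, we have $\frac{N_n-C}{2C}>0$, so $k\ge 1$ and $S_p\ge C\ge 1$. From the upper bound, $S_p<\frac{N_n+C}{2}<N_n$, where the last step uses $C<N_n$. Both children are therefore nonempty, and the recursion terminates.

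Third, for tightness I exhibit a region attaining $\delta=C$. The lower bound $2S_p-N_n=-C$ is attained exactly when $2C$ divides $N_n-C$, i.e.\ when $N_n=(2m+1)C$ for an integer $m\ge 1$. The simplest case is $N_n=3C$. Then $k=1$ and $S_p=C$, so $L=C$ and $R=2C$, giving $\delta=C$. The upper bound is strict, so $C$ is the exact supremum.

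I do not expect a real obstacle. The only delicate point is the direction of the strict versus non-strict inequalities coming from the ceiling. This is what makes the bound $\le C$ achieved on one side only, and the tightness example must be chosen on that side.
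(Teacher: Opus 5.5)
Your proof is correct and follows the paper's argument. The paper notes that $S_p$ is the multiple of $C$ nearest to $N_n/2$, so $|S_p - N_n/2| \le C/2$, which is your ceiling sandwich $-C \le 2S_p - N_n < C$, and it uses the same tightness example $N_n = 3C$; your version is a bit more precise, since it shows ties are broken downward (so $S_p = C$ rather than $2C$ at $N_n = 3C$) and checks that both children are nonempty.
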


{\textit{Proof sketch.} By Eq.~\ref{splitpoint}, $S_p$ is the capacity multiple nearest to $N_n/2$, hence $|S_p - N_n/2| \le C/2$ and $\delta = |2S_p - N_n| \le C$. Tightness: for $N_n = 3C$ the nearest multiple is $C$ (or $2C$), giving $\delta = C$.}

\begin{definition}[\textbf{Region Aspect Ratio}]
\label{def:Region Aspect Ratio}
{
For a generated spatial region, the aspect ratio $\alpha$ is defined as
\begin{equation}
\alpha = \frac{\max(\Delta_{\mathit{Lng}}, \Delta_{\mathit{Lat}})}{\min(\Delta_{\mathit{Lng}}, \Delta_{\mathit{Lat}})},
\end{equation}
where $\Delta_{\mathit{Lng}}$ and $\Delta_{\mathit{Lat}}$ are the longitude and latitude spans. This metric serves as an empirical compactness diagnostic.
}
\end{definition}
% BLUE-EDIT: replaced the invalid aspect-ratio monotonicity theorem with a
% strictly correct axis-selection proposition; compactness remains empirical.
\begin{proposition}[\textbf{{Longest-Span Axis Selection}}]
\label{prop:axis-selection}
{
Consider a current region with coordinate spans $\Delta_{\max}\ge\Delta_{\min}$.
The rule in Eq.~\ref{splitaxis} selects an axis whose span equals $\Delta_{\max}$.
Therefore, whenever $\Delta_{\max}>\Delta_{\min}$, Square Partition does not
deliberately split along the strictly shorter axis. By contrast, a depth-alternating
{K-D Tree} provides no such per-region guarantee. In particular, if its scheduled axis
is the shorter one and a child retains the longer-axis span while its shorter-axis
span contracts by a factor $\rho\in(0,1)$, then that child's aspect ratio increases
from $\alpha$ to $\alpha/\rho$.
}
\end{proposition}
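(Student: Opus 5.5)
The proof splits into two independent parts, one for each claim of Proposition~\ref{prop:axis-selection}. For the first claim I would read off Eq.~\ref{splitaxis} directly, using the spans $\Delta_{\mathrm{Lng}}$ and $\Delta_{\mathrm{Lat}}$ computed in Step~1.

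\emph{Square Partition.} Consider the two cases of the rule.
\begin{itemize}
\item If $\Delta_{\mathrm{Lng}}\ge\Delta_{\mathrm{Lat}}$, longitude is chosen, and its span equals $\max(\Delta_{\mathrm{Lng}},\Delta_{\mathrm{Lat}})=\Delta_{\max}$.
\item Otherwise $\Delta_{\mathrm{Lat}}>\Delta_{\mathrm{Lng}}$, latitude is chosen, and again the selected span equals $\Delta_{\max}$.
\end{itemize}
Now suppose $\Delta_{\max}>\Delta_{\min}$. Then exactly one axis attains $\Delta_{\max}$, and the rule picks it. So the strictly shorter axis is never the splitting axis. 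The tie case $\Delta_{\max}=\Delta_{\min}$ needs no argument, since then neither axis is strictly shorter. I would remark that this guarantee is per region and unconditional: it does not depend on the depth or on the node distribution.

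\emph{K-D Tree.} For the contrast I would first note that the axis of a depth-alternating K-D Tree is a function of depth alone. A single configuration therefore shows that no per-region guarantee exists: take a region at a depth scheduled for the shorter axis (say latitude) with $\Delta_{\mathrm{Lng}}>\Delta_{\mathrm{Lat}}$.

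The quantitative statement then follows by substitution. Let the parent have spans $\Delta_{\max}$ (longer axis) and $\Delta_{\min}$ (shorter axis), so $\alpha=\Delta_{\max}/\Delta_{\min}$. By hypothesis the child has longer-axis span $\Delta_{\max}$ and shorter-axis span $\rho\Delta_{\min}$ with $\rho\in(0,1)$. Because $\rho\Delta_{\min}<\Delta_{\min}\le\Delta_{\max}$, the maximum and minimum in Definition~\ref{def:Region Aspect Ratio} are still attained on the same axes. Hence
\[
\alpha_{\text{child}}=\frac{\Delta_{\max}}{\rho\,\Delta_{\min}}=\frac{\alpha}{\rho}>\alpha .
\]

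For existence, I would give an explicit point set in which such a child occurs. For example, use points spread along longitude in two latitude bands, so that a median latitude split leaves each child with the full longitude range and a contracted latitude span. This shows the hypothesis is not vacuous.

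The main obstacle is not technical. It is stating the scope honestly: the proposition guarantees only the axis choice, not any bound on the child's aspect ratio. A longest-axis split can still shrink the longer span below the shorter one, and the median position may cut unevenly. So I would not claim monotonicity of $\alpha$, and I would keep the tie and degenerate cases ($\Delta_{\min}=0$, where $\alpha$ is undefined) out of scope explicitly.
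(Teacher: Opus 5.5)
Your proposal is correct and follows the paper's proof. The first claim is read off directly from the maximization in Eq.~\ref{splitaxis}, and the K-D Tree claim is the same substitution, giving $\Delta_{\max}/(\rho\Delta_{\min})=\alpha/\rho>\alpha$. Beyond the paper, you check that $\rho\Delta_{\min}<\Delta_{\min}\le\Delta_{\max}$ keeps the max and min of the aspect ratio on the same axes (the paper uses this implicitly), and you give a two-band example showing the hypothesis is non-vacuous; both are sound refinements.
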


{
\textit{Proof.} The first statement follows directly from the maximization in
Eq.~\ref{splitaxis}. For the conditional {K-D Tree} statement, the specified child has
spans $\Delta_{\max}$ and $\rho\Delta_{\min}$, and hence aspect ratio
$\Delta_{\max}/(\rho\Delta_{\min})=\alpha/\rho>\alpha$.

Proposition~\ref{prop:axis-selection} identifies a failure mode avoided by the
axis-selection rule. A child's observed bounding box also
depends on the spatial support of the selected nodes, so compactness is
evaluated empirically in Table~\ref{exp:partition_quality}.
}

\subsection{Analysis of Low-rank Projection}
\label{subsubsection:low_rank}
We analyze the low-rank projection in the {Inter-Patch Communication} from two angles: its approximation error on low-rank dependencies, and the expressiveness of its function class relative to linear baselines.
\begin{theorem}[\textbf{Low-Rank Approximation Capacity}]
\label{theorem:low_rank}
{
Let $M \in \mathbb{R}^{P \times P}$ be an ideal global dependency matrix with singular values $\sigma_1 \ge \sigma_2 \ge \dots \ge \sigma_P \ge 0$. Among all rank-$r$ matrices expressible as $\hat{M} = \Theta_{r1}\Theta_{r2}$, where $\Theta_{r1} \in \mathbb{R}^{P \times r}$ and $\Theta_{r2} \in \mathbb{R}^{r \times P}$, the minimum Frobenius approximation error is
\begin{equation}
\min_{\Theta_{r1}, \Theta_{r2}} \| M - \Theta_{r1} \Theta_{r2} \|_F = \sqrt{\sum_{k=r+1}^{P} \sigma_k^2}
\end{equation}
This result means that the factorized projection can represent the best rank-$r$ approximation of $M$; when the empirical singular spectrum decays rapidly, the discarded tail energy is small.
}
\end{theorem}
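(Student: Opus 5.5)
This is the Eckart--Young--Mirsky theorem; the plan is to reduce the factorized problem to rank-constrained approximation and then prove both an upper and a lower bound.

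\emph{Reduction.} The set $\{\Theta_{r1}\Theta_{r2} : \Theta_{r1}\in\mathbb{R}^{P\times r},\ \Theta_{r2}\in\mathbb{R}^{r\times P}\}$ equals the set of $P\times P$ matrices of rank at most $r$. One inclusion holds because $\operatorname{rank}(\Theta_{r1}\Theta_{r2})\le r$. For the other, any $\hat M$ with $\operatorname{rank}\hat M\le r$ admits a factorization $\hat M = AB$ with $A\in\mathbb{R}^{P\times r}$ and $B\in\mathbb{R}^{r\times P}$; take $A$ to be a basis of the column space, padded with zero columns. Hence the minimum over factors equals $\min_{\operatorname{rank}\hat M\le r}\|M-\hat M\|_F$. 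I will also note that this minimum is attained.

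\emph{Upper bound.} Write the SVD $M=U\Sigma V^\top=\sum_k \sigma_k u_k v_k^\top$. Set $\Theta_{r1}=U_r\Sigma_r$ and $\Theta_{r2}=V_r^\top$, where $U_r,V_r$ are the first $r$ singular vectors. Then $M-\Theta_{r1}\Theta_{r2}=\sum_{k>r}\sigma_k u_kv_k^\top$. By orthogonal invariance of the Frobenius norm, its squared norm is $\sum_{k=r+1}^P\sigma_k^2$.

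\emph{Lower bound (main step).} Let $B$ have rank at most $r$. I will first prove the Weyl-type singular value inequality $\sigma_{i+j-1}(X+Y)\le\sigma_i(X)+\sigma_j(Y)$, using the Courant--Fischer characterization $\sigma_k(X)=\min_{\dim W\le k-1}\max_{x\perp W,\|x\|=1}\|Xx\|$. Apply it with $X=M-B$, $Y=B$ and $j=r+1$. Since $\sigma_{r+1}(B)=0$, this gives $\sigma_{i+r}(M)\le\sigma_i(M-B)$ for $i=1,\dots,P-r$. Then
\begin{equation}
\|M-B\|_F^2=\sum_i\sigma_i(M-B)^2\ge\sum_{i=1}^{P-r}\sigma_{i+r}(M)^2 .
\end{equation}
The delicate part is the Courant--Fischer dimension count: intersect the kernel of $B$ (dimension at least $P-r$) with the complement of the subspace realizing $\sigma_i(M-B)$. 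I expect this step to be the main obstacle; alternatively, I can simply cite it as the classical Eckart--Young--Mirsky theorem.

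Combining the upper and lower bounds gives equality. Taking square roots yields the stated formula, and the attained minimizer is the truncated SVD.
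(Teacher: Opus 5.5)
Your proposal is correct and follows the same route as the paper: note that $\Theta_{r1}\Theta_{r2}$ has rank at most $r$, invoke Eckart--Young--Mirsky, and compute the truncated-SVD residual $\sum_{k>r}\sigma_k^2$ from orthonormality of the singular vectors. The only differences are that you show attainment explicitly via $\Theta_{r1}=U_r\Sigma_r$, $\Theta_{r2}=V_r^\top$ (which the paper uses only implicitly) and you prove the lower bound through Weyl's inequality, where the paper simply cites the theorem; your Courant--Fischer dimension count is sound, so the step you flagged as the main obstacle presents no real difficulty.
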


\textit{Proof sketch.} {Since $\hat{M}$ is the product of two matrices with inner dimensionality $r$, $\operatorname{rank}(\hat{M})\le r$. By the Eckart-Young-Mirsky theorem, the best rank-$r$ approximation of $M$ is its truncated SVD $M_r=\sum_{i=1}^{r}\sigma_i u_i v_i^\top$. The residual satisfies
$$
\| M - M_r \|_F^2 = \left\| \sum_{k=r+1}^{P} \sigma_k u_k v_k^\top \right\|_F^2 = \sum_{k=r+1}^{P} \sigma_k^2
$$
because the singular vectors are orthonormal.}

% BLUE-EDIT: replaced the unsupported baseline-subsumption theorem with a
% structural result that exactly characterizes the shared inter-patch map.
\begin{proposition}[\textbf{{Shared Factorized Inter-Patch Operator}}]
\label{prop:structured-mixing}
{
Fix one HLI layer, sample, temporal patch, and feature channel, and consider the
bias-free factorized inter-patch projection before residual addition. Assume
$N=CP$, $1\le r\le P$, that no nonlinearity is inserted between the two factors, and that the same matrix
$M=\Theta_{r1}\Theta_{r2}\in\mathbb{R}^{P\times P}$ is applied independently to
each of the $C$ within-patch indices. If the node vector is ordered as
$x=[x_1^\top,\ldots,x_C^\top]^\top$, where $x_c\in\mathbb{R}^{P}$ collects the
$c$-th index across all patches, then the induced node-level linear map is
\begin{equation}
y=(I_C\otimes M)x.
\end{equation}
This projection in one layer uses $2Pr$ trainable matrix
parameters, has rank $C\,\operatorname{rank}(M)\le Cr$, and can be applied in
$O(CPr)=O(Nr)$ operations per feature channel without materializing an
$N\times N$ matrix, where $N=CP$.
}
\end{proposition}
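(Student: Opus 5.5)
The proof is a direct computation on the Kronecker structure, split into four parts: the form of the map, the parameter count, the rank, and the cost. In each part we fix one layer, one sample, one temporal patch and one feature channel, so the inter-patch projection acts on a scalar per node. Under the stated ordering, the input is the stacked vector $x=[x_1^\top,\ldots,x_C^\top]^\top\in\mathbb{R}^{CP}$. By hypothesis the bias-free factorized projection applies the same $M=\Theta_{r1}\Theta_{r2}$ independently to each within-patch slice $x_c$. Because no nonlinearity sits between the two factors, $\Theta_{r1}(\Theta_{r2}x_c)=Mx_c$, so the output is $y_c=Mx_c$ for every $c$. Stacking these outputs gives the block-diagonal matrix $\mathrm{diag}(M,\ldots,M)$, which by the definition of the Kronecker product equals $I_C\otimes M$. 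This establishes $y=(I_C\otimes M)x$.

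\emph{Parameter count.} The only trainable matrix parameters of the projection are the entries of $\Theta_{r1}\in\mathbb{R}^{P\times r}$ and $\Theta_{r2}\in\mathbb{R}^{r\times P}$. These number $Pr+rP=2Pr$. Since they are shared across the $C$ slices, this count does not scale with $C$.

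\emph{Rank.} I would invoke the standard identity $\operatorname{rank}(A\otimes B)=\operatorname{rank}(A)\operatorname{rank}(B)$. With $A=I_C$ it gives $\operatorname{rank}(I_C\otimes M)=C\operatorname{rank}(M)$. A self-contained alternative is to observe that the column space of a block-diagonal matrix is the direct sum of the column spaces of its blocks. For the upper bound, $\operatorname{rank}(M)\le\min(\operatorname{rank}\Theta_{r1},\operatorname{rank}\Theta_{r2})\le r$, because $M$ factors through $\mathbb{R}^r$; this is the same observation used in the proof of Theorem~\ref{theorem:low_rank}. Hence $C\operatorname{rank}(M)\le Cr$.

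\emph{Complexity.} We never form $M$ or the $N\times N$ matrix $I_C\otimes M$. Instead, for each slice we compute $z_c=\Theta_{r2}x_c$, costing $O(rP)$, and then $y_c=\Theta_{r1}z_c$, also costing $O(Pr)$. Summing over the $C$ slices gives $O(CPr)$, which equals $O(Nr)$ by the assumption $N=CP$. Memory stays at $O(Pr)$ for the parameters plus $O(Cr)$ for the intermediates $z_c$.

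No step is a real obstacle. The only point needing care is the bookkeeping that the stated node ordering, which groups all patches at a fixed within-patch index, is exactly the ordering that makes the operator $I_C\otimes M$ rather than $M\otimes I_C$. I would make this explicit by writing out the index map from (within-patch index $c$, patch $p$) to coordinate $(c-1)P+p$.
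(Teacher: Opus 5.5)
Your proposal is correct and follows essentially the same route as the paper's proof: you stack the slice outputs $y_c=Mx_c$ into the block-diagonal $I_C\otimes M$, count the $Pr+rP$ entries of the two factors, invoke $\operatorname{rank}(A\otimes B)=\operatorname{rank}(A)\operatorname{rank}(B)$, and sum the factorized $O(Pr)$ per-slice cost over the $C$ slices. Your explicit argument for $\operatorname{rank}(M)\le r$ via the factorization through $\mathbb{R}^r$ spells out what the paper leaves implicit, and your remark on index ordering matches the paper's note that patch-major vectorization gives the permutation-similar $M\otimes I_C$.
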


{
\textit{Proof.} Applying $M$ independently gives
\begin{equation}
y=[(Mx_1)^\top,\ldots,(Mx_C)^\top]^\top=(I_C\otimes M)x.
\end{equation}
The two factors contain $Pr+rP=2Pr$ matrix parameters. The rank identity follows
from $\operatorname{rank}(A\otimes B)=\operatorname{rank}(A)\operatorname{rank}(B)$,
and factorized multiplication costs $O(Pr)$ for each of the $C$ index vectors.

Under patch-major rather than index-major vectorization, the same operator is
written as $M\otimes I_C$; the two forms are permutation-similar and have the
same rank and cost. The proposition concerns only the isolated factorized
projection. Residual addition and nonlinear feature transformations are not
rank-bounded by this statement. For $d$ feature channels, its per-layer cost is
$O(dCPr)=O(dNr)$, with the corresponding batch and temporal factors multiplying
this cost.

Theorem~\ref{theorem:low_rank} bounds the approximation error of the best
rank-$r$ factorization for a \emph{fixed} target matrix, while
Proposition~\ref{prop:structured-mixing} gives the exact form and cost of that
operator once it is shared across patches. The rank required in practice is
selected per dataset in \appref{app:Hyper-parameter}.
}
% BLUE-EDIT: removed a stray Markdown code fence from the LaTeX source.

\subsection{Complexity Analysis}
\label{proof:Complexity}
We denote the number of nodes, the number of nodes within each patch, the number of patches, the rank in the low-rank projection, and the hidden dimensionality by $N$, $C$, $P$, $r$, and $d$, respectively. Square Partition has $O(N \cdot ({\log N})^2)$ time complexity. Thus, constructing the balanced binary tree can be done efficiently in a pre-processing step. Next, we analyze the time and space complexity of SqLinear, focusing on the Hierarchical Linear Interaction (HLI) block. 

\textbf{Time Complexity.}
The time complexity of the HLI block is primarily determined by the {FeatureMixer} operations. The \textit{{Intra-Patch Refinement}} module applies a {FeatureMixer} within each of the $P$ patches, each of size $C$, resulting in a complexity of $O(P \cdot C \cdot d^2)$. The \textit{{Inter-Patch Communication}} module also involves a {FeatureMixer}, also with complexity $O(P \cdot C \cdot d^2)$.
In addition, the {Inter-Patch Communication} module includes a low-rank projection mechanism for global communication, with {complexity $O(C \cdot P \cdot d \cdot r) = O(N \cdot d \cdot r)$, which is dominated by $O(N \cdot d^2)$ since $r \le d$ in all our settings. Using $N = P \cdot C$, the time complexity becomes $O(N \cdot d^2)$.}
Thus, SqLinear scales linearly with respect to the number of nodes.

\textbf{Space Complexity.}
The space complexity is governed by the largest tensors stored during the forward pass, which are of dimensionality $N \times d$. Hence, the space complexity is $O(N \cdot d)$.

% {Overall, the analysis clarifies the role of Square Partition without overstating geometric optimality. Grid and Quadtree generate geometrically regular cells, but may waste computation on empty or under-filled regions; K-D Tree tightly balances node counts, but may create elongated cells due to alternating-axis splitting. Square Partition keeps the defensible properties that matter for SqLinear's tensorized forecasting pipeline: padding-free utilization under the selected capacities, bounded split imbalance, and empirically compact patches encouraged by longest-span axis selection.}

%% file: Appendix/3_Exp.tex
\section{Additional Experiments}
\label{app:exp}

\begin{table}[!tb]
    \centering
    \caption{Dataset statistics.}
    \vspace{-3mm}
    \resizebox{1.0\linewidth}{!}{
    \begin{tabular}{lcccc}
    \toprule
    Datasets & \#Nodes & \#Edges & \#TimeSlices & Timespan\\
    \midrule
    SD & 716 & 17,319 & 525,888 & 01/01/2017--12/31/2021\\
    GBA & 2,352 & 61,246  & 525,888  & 01/01/2017--12/31/2021\\
    GLA & 3,834 & 98,703 & 525,888 & 01/01/2017--12/31/2021\\
    CA & 8,600 & 201,363 & 525,888 & 01/01/2017--12/31/2021\\
    \bottomrule
    \end{tabular}}
    \label{exp:data}
\end{table}

\subsection{Detailed Experimental Setup}
\label{app:experimental-setup}
\subsubsection{Datasets Description}
\label{exp:datasets}
We conduct experiments on four large-scale benchmark datasets, SD, GBA, GLA, and CA, whose graph sizes range from 716 to 8,600 sensor nodes. Following established traffic forecasting protocols, each dataset is split chronologically into training, validation, and testing sets with a 6:2:2 ratio. The standard forecasting setting uses 12 historical steps to predict the next 12 steps. To examine temporal scalability, we additionally use 96 historical steps and forecast 48, 96, 192, and 672 future steps on SD, GBA, GLA, and CA. Dataset statistics are summarized in Table~\ref{exp:data}.

\subsubsection{Implementation Details}
\label{exp:Implementations}
SqLinear is trained using the AdamW optimizer~\cite{loshchilov2017decoupled} with an initial learning rate of 0.001 and a weight decay of 0.0001.
%The learning rate is adjusted by ReduceLROnPlateau~\cite{schmidhuber2015deep} with a patience of five epochs.
%The input projection dimension is set to 64, while the day-of-week, timeslice-of-day, and spatial embedding dimensions are set to 32. \textcolor{blue}{The leaf node capacity is set to 4, 48, 54, and 86 for SD, GBA, GLA, and CA, respectively. The number of Hierarchical Linear Interaction layers is set to 1, 2, 4, and 4 for these datasets, respectively.} Hyperparameter effects are analyzed in Section~\ref{app:Hyper-parameter}.
We follow the official baseline configurations~\cite{liu2024largest} and only adjust the input and prediction windows to match the corresponding forecasting protocol. Long-Horizon prediction experiments are implemented on the BasicTS~\cite{Basic} framework. All experiments are implemented in PyTorch and conducted on an NVIDIA RTX A40 GPU with 48 GB of memory. For efficiency analysis, runtime denotes the measured training wall-clock time of a run, and inference latency denotes the measured evaluation latency.

\subsubsection{Baselines}
For the standard setting, we compare SqLinear with {11 SOTA baselines} from three families. \textbf{(i) The GNN-based group} includes GWNET~\cite{wu2019graph}, AGCRN~\cite{bai2020adaptive}, D2STGNN~\cite{shao2022decoupled}, and DSTAGNN~\cite{lan2022dstagnn}. \textbf{(ii) The linear-model group} includes STID~\cite{shao2022spatial}, BigST~\cite{han2024bigst}, and RPMixer~\cite{yeh2024rpmixer}. \textbf{(iii) The attention- and expert-based group} includes STAEformer~\cite{liu2023spatio}, STWave~\cite{fang2023spatio}, FaST~\cite{zhao2026fast}, and PatchSTG~\cite{fang2024efficient}{.} For the long-horizon study, we choose strong or lightweight scalability-oriented representatives, including STID, RPMixer, BigST, and PatchSTG, so that the comparison directly tests temporal scalability without overcrowding the main text.

\subsection{Long-Horizon Prediction Performance}
\label{app:scalability-metrics}
Table~\ref{tab:scalability_metrics} reports the detailed MAE, RMSE, and MAPE values for the long-horizon scalability study. SqLinear achieves the best performance on all three metrics across all 16 dataset-horizon combinations. The gains remain stable as the prediction length increases, and SqLinear retains the lowest error on every metric even under the longest $96{\Rightarrow}672$ setting. These results confirm that SqLinear preserves accurate spatio-temporal forecasting when the temporal horizon is substantially enlarged.

\begin{table*}[tb]
\centering
\caption{Detailed long-horizon scalability results. \redval{Bold} indicates the best performance and \blueval{underline} denotes the second best performance. The notation $96{\Rightarrow}48$ means using the past 96 time steps to predict the next 48 steps; MAPE is reported in percentage.}
\vspace{-3mm}
\label{tab:scalability_metrics}
\setlength{\tabcolsep}{3pt}
\renewcommand{\arraystretch}{0.95}
\resizebox{0.85\textwidth}{!}{
\begin{tabular}{c|l|ccc|ccc|ccc|ccc}
\toprule
\multirow{2}{*}{Data} & \multirow{2}{*}{Method} &
\multicolumn{3}{c|}{$96{\Rightarrow}48$} &
\multicolumn{3}{c|}{$96{\Rightarrow}96$} &
\multicolumn{3}{c|}{$96{\Rightarrow}192$} &
\multicolumn{3}{c}{$96{\Rightarrow}672$} \\
\cmidrule(lr){3-5}\cmidrule(lr){6-8}\cmidrule(lr){9-11}\cmidrule(lr){12-14}
 & & MAE & RMSE & MAPE &
     MAE & RMSE & MAPE &
     MAE & RMSE & MAPE &
     MAE & RMSE & MAPE \\
\midrule
\multirow{6}{*}{SD}
& STID & 20.89 & 39.14 & 14.67 & 22.71 & 42.81 & 15.90 & 24.77 & 46.91 & 17.59 & 27.14 & 50.71 & 19.28 \\
& RPMixer & 21.78 & 37.71 & 14.99 & 23.80 & 41.68 & 16.77 & 25.91 & 47.18 & 18.35 & 28.08 & 52.68 & 20.67 \\
& BigST & 21.98 & 37.43 & 16.41 & 24.64 & 42.92 & 18.66 & 26.90 & 47.77 & 20.46 & 29.63 & 52.29 & 22.47 \\
& PatchSTG & 20.60 & 39.45 & 14.23 & 23.21 & 45.10 & 16.17 & 26.06 & 49.99 & 17.91 & 28.16 & 53.78 & 19.48 \\
& FaST & \blueval{19.37} & \blueval{34.54} & \blueval{12.85} & \blueval{21.46} & \blueval{39.18} & \blueval{14.37} & \blueval{24.23} & \blueval{45.09} & \blueval{16.48} & \blueval{26.61} & \blueval{49.42} & \blueval{17.99} \\
& \textbf{SqLinear} & \redval{19.31} & \redval{33.99} & \redval{12.59} & \redval{20.89} & \redval{36.78} & \redval{13.74} & \redval{23.46} & \redval{42.39} & \redval{15.85} & \redval{26.19} & \redval{48.84} & \redval{17.91} \\
\midrule
\multirow{6}{*}{GBA}
& STID & 22.03 & 38.96 & 18.94 & 23.58 & 41.46 & 20.57 & \blueval{25.50} & \blueval{44.45} & 22.66 & 27.73 & 47.59 & 24.66 \\
& RPMixer & 23.47 & 40.18 & 19.36 & 24.03 & 41.27 & 20.61 & 25.92 & 44.78 & 22.10 & 27.94 & 47.69 & 25.02 \\
& BigST & 24.03 & 39.61 & 21.52 & 25.39 & 41.53 & 23.49 & 27.48 & 44.70 & 25.62 & 29.50 & 48.11 & 27.56 \\
& PatchSTG & \blueval{21.78} & \blueval{37.56} & 17.62 & 24.36 & 43.45 & 20.83 & 26.16 & 46.37 & 22.53 & 28.52 & 49.99 & 24.63 \\
& FaST & 22.00 & 38.71 & \blueval{17.10} & \blueval{23.39} & \blueval{41.13} & \blueval{19.85} & 25.97 & 44.95 & \blueval{21.68} & \blueval{27.12} & \blueval{47.36} & \blueval{22.96} \\
& \textbf{SqLinear} & \redval{20.99} & \redval{37.13} & \redval{17.02} & \redval{22.20} & \redval{39.31} & \redval{18.39} & \redval{23.77} & \redval{41.97} & \redval{20.25} & \redval{26.09} & \redval{46.06} & \redval{22.77} \\
\midrule
\multirow{6}{*}{GLA}
& STID & 22.34 & 40.52 & 15.25 & 23.98 & 43.64 & 16.68 & 26.21 & 47.02 & 18.86 & 28.91 & 51.16 & 20.83 \\
& RPMixer & 22.91 & \blueval{38.13} & 15.84 & 23.65 & 40.86 & 16.16 & 25.82 & \blueval{44.09} & 18.04 & 28.56 & 48.76 & 19.91 \\
& BigST & 24.43 & 41.01 & 18.09 & 25.79 & 43.27 & 19.85 & 28.15 & 46.76 & 21.72 & 30.76 & 50.95 & 24.17 \\
& PatchSTG & 21.75 & 39.55 & 14.59 & 23.80 & 43.33 & 16.41 & 26.49 & 48.62 & 18.32 & 29.51 & 53.01 & 20.86 \\
& FaST & \blueval{21.70} & 38.61 & \blueval{14.14} & \blueval{22.95} & \blueval{40.84} & \blueval{15.25} & \blueval{25.62} & 44.53 & \blueval{16.99} & \blueval{27.37} & \blueval{48.32} & \blueval{18.84} \\
& \textbf{SqLinear} & \redval{20.83} & \redval{36.80} & \redval{13.42} & \redval{22.42} & \redval{40.27} & \redval{14.97} & \redval{24.29} & \redval{43.43} & \redval{16.52} & \redval{26.89} & \redval{47.63} & \redval{18.57} \\
\midrule
\multirow{6}{*}{CA}
& STID & 20.61 & 37.67 & 16.38 & 22.09 & 40.09 & 17.96 & 24.29 & 43.88 & 20.09 & 26.54 & 47.29 & 22.00 \\
& RPMixer & 21.31 & 35.97 & 17.39 & 21.96 & \blueval{37.63} & 17.59 & 23.86 & 41.16 & 19.14 & 26.11 & 45.44 & 21.32 \\
& BigST & 22.32 & 37.74 & 18.60 & 24.24 & 40.69 & 21.08 & 26.38 & 43.63 & 23.22 & 28.38 & 47.32 & 24.72 \\
& PatchSTG & 20.67 & 37.99 & 16.38 & 22.64 & 41.76 & 18.04 & 24.51 & 44.73 & 20.13 & 28.42 & 50.94 & 23.20 \\
& FaST & \blueval{19.87} & \blueval{35.43} & \blueval{14.96} & \blueval{20.95} & 37.94 & \blueval{16.16} & \blueval{22.78} & \blueval{41.07} & \blueval{17.98} & \blueval{25.29} & \blueval{45.22} & \blueval{20.29} \\
& \textbf{SqLinear} & \redval{19.30} & \redval{35.27} & \redval{14.82} & \redval{20.53} & \redval{37.40} & \redval{15.95} & \redval{21.98} & \redval{39.19} & \redval{17.27} & \redval{24.79} & \redval{44.55} & \redval{19.97} \\
\bottomrule
\end{tabular}}
\end{table*}
\subsection{Hyperparameter Study (RQ5)}
\label{app:Hyper-parameter}
We study key hyperparameters on GBA and CA, as shown in Fig.~\ref{exp:Hyper}. Similar results were observed on other datasets.
\begin{figure}[!tb]
    \centering
    \begin{subfigure}{0.30\linewidth}
        \includegraphics[width=\linewidth]{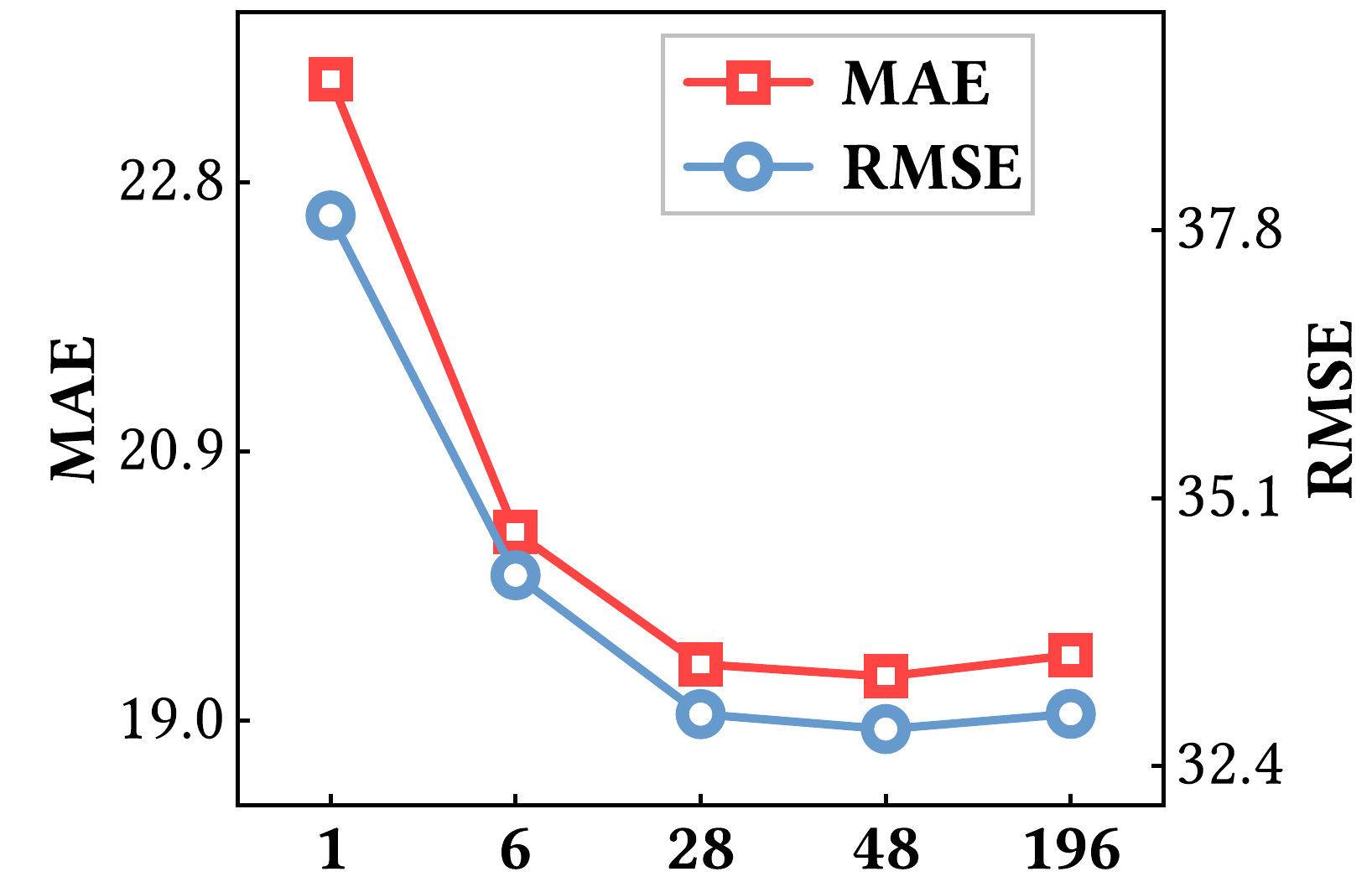}
        \caption{$C$ on GBA}
    \end{subfigure}
    \hfill
    \begin{subfigure}{0.30\linewidth}
        \includegraphics[width=\linewidth]{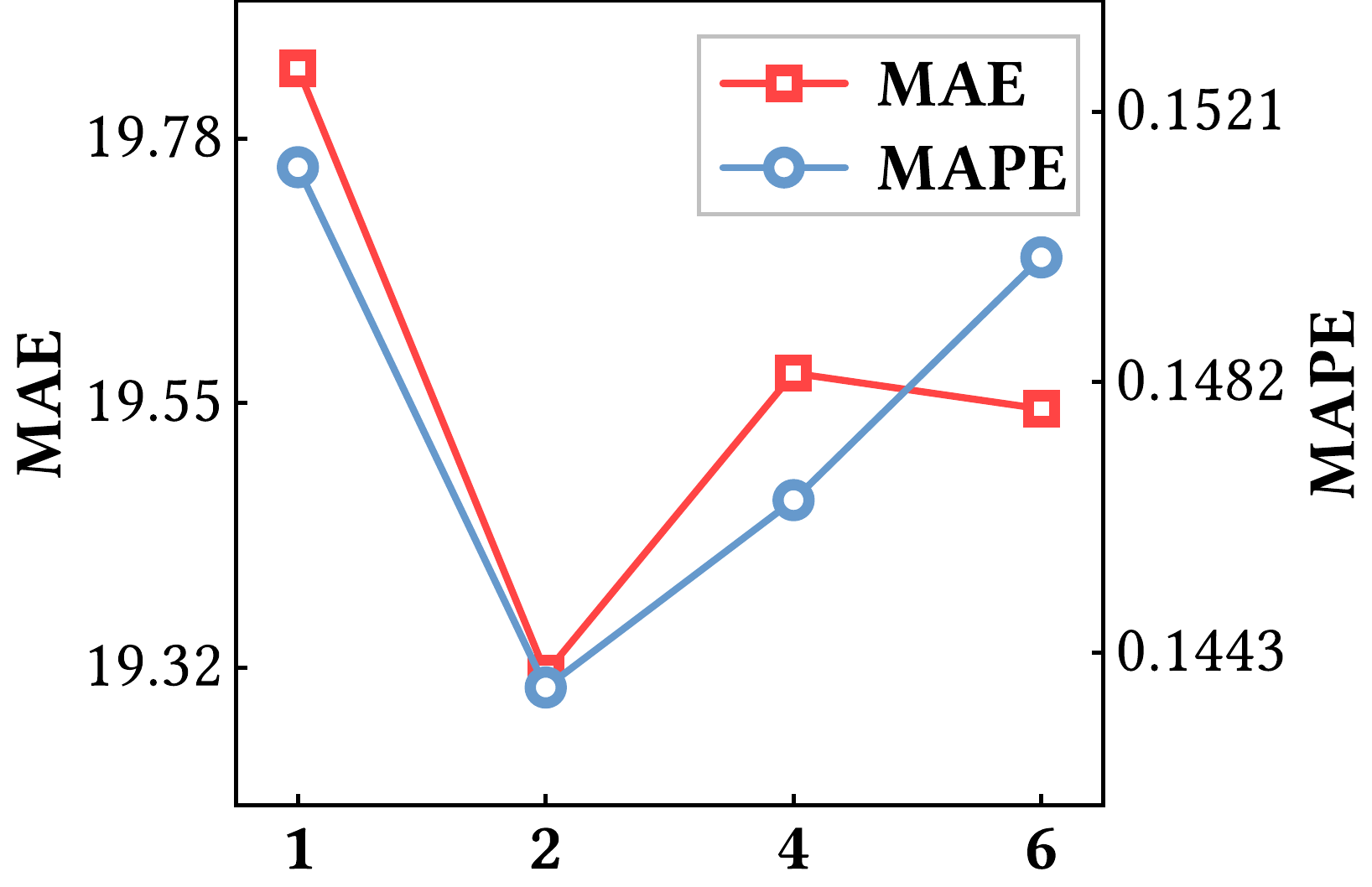}
        \caption{$L$ on GBA}
    \end{subfigure}
    \hfill
    \begin{subfigure}{0.30\linewidth}
        \includegraphics[width=\linewidth]{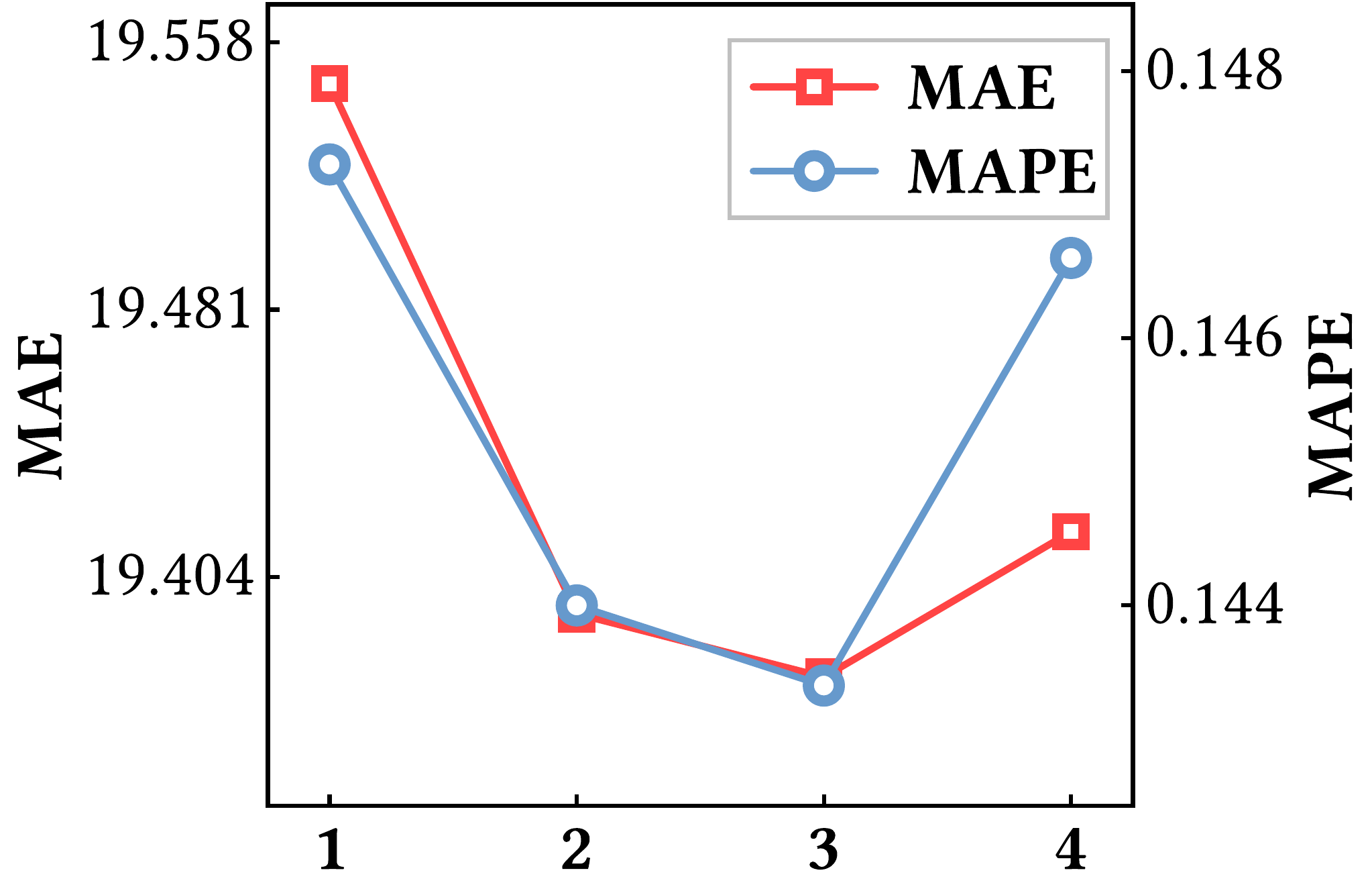}
        \caption{{$r$} on GBA}
    \end{subfigure}
    \begin{subfigure}{0.30\linewidth}
        \includegraphics[width=\linewidth]{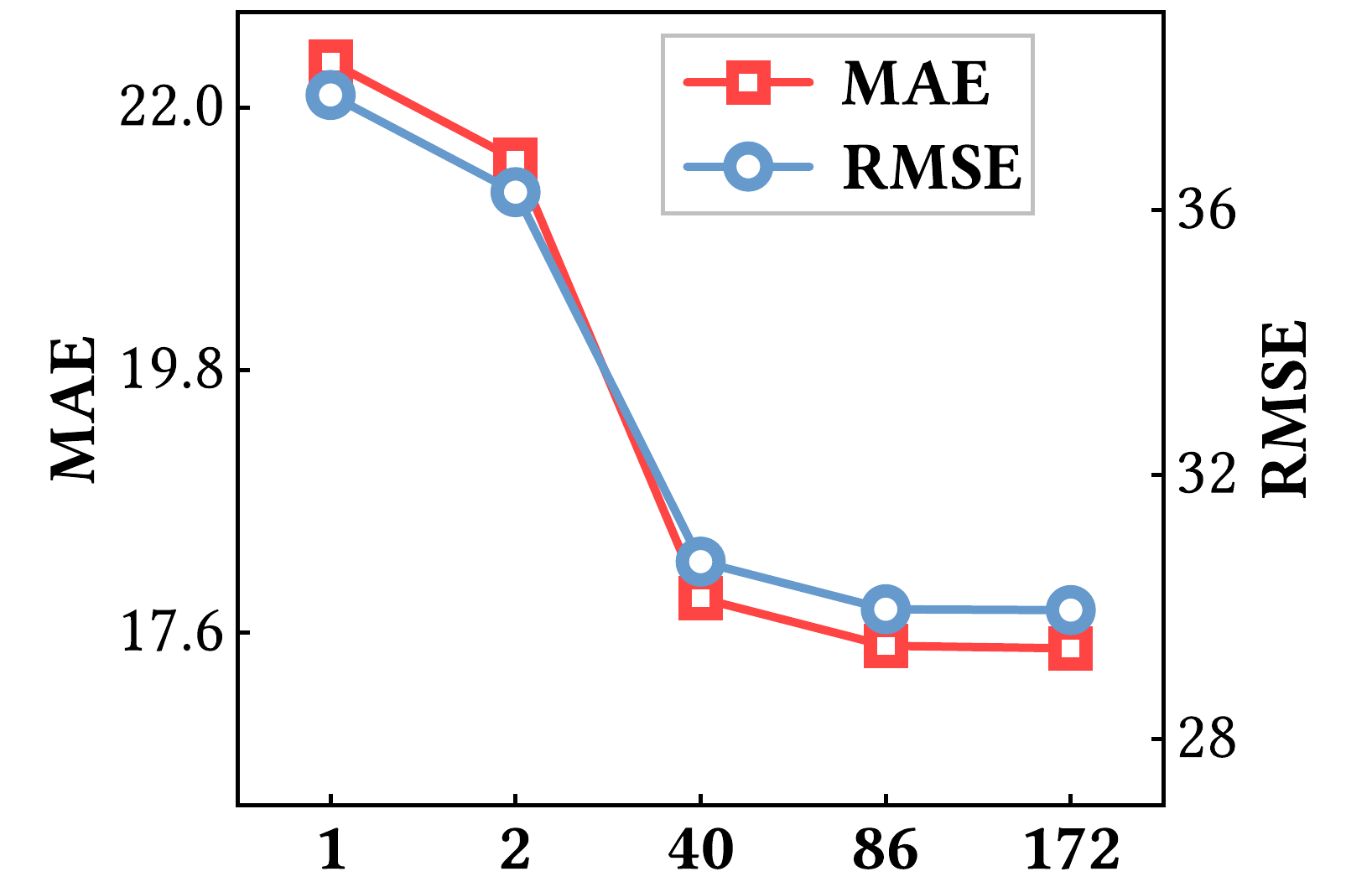}
        \caption{$C$ on CA}
    \end{subfigure}
    \hfill
    \begin{subfigure}{0.30\linewidth}
        \includegraphics[width=\linewidth]{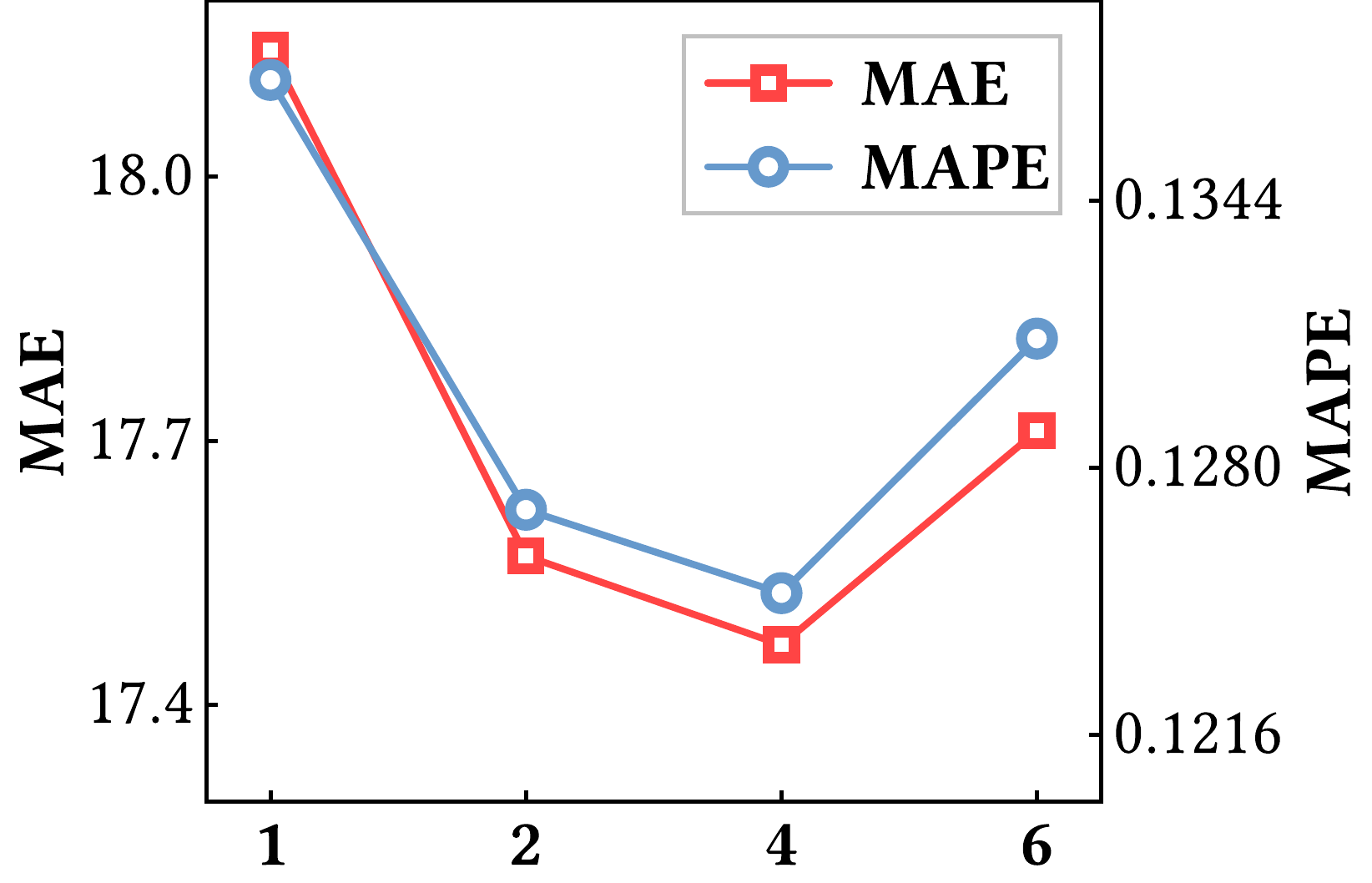}
        \caption{$L$ on CA}
    \end{subfigure}
    \hfill
    \begin{subfigure}{0.30\linewidth}
        \includegraphics[width=\linewidth]{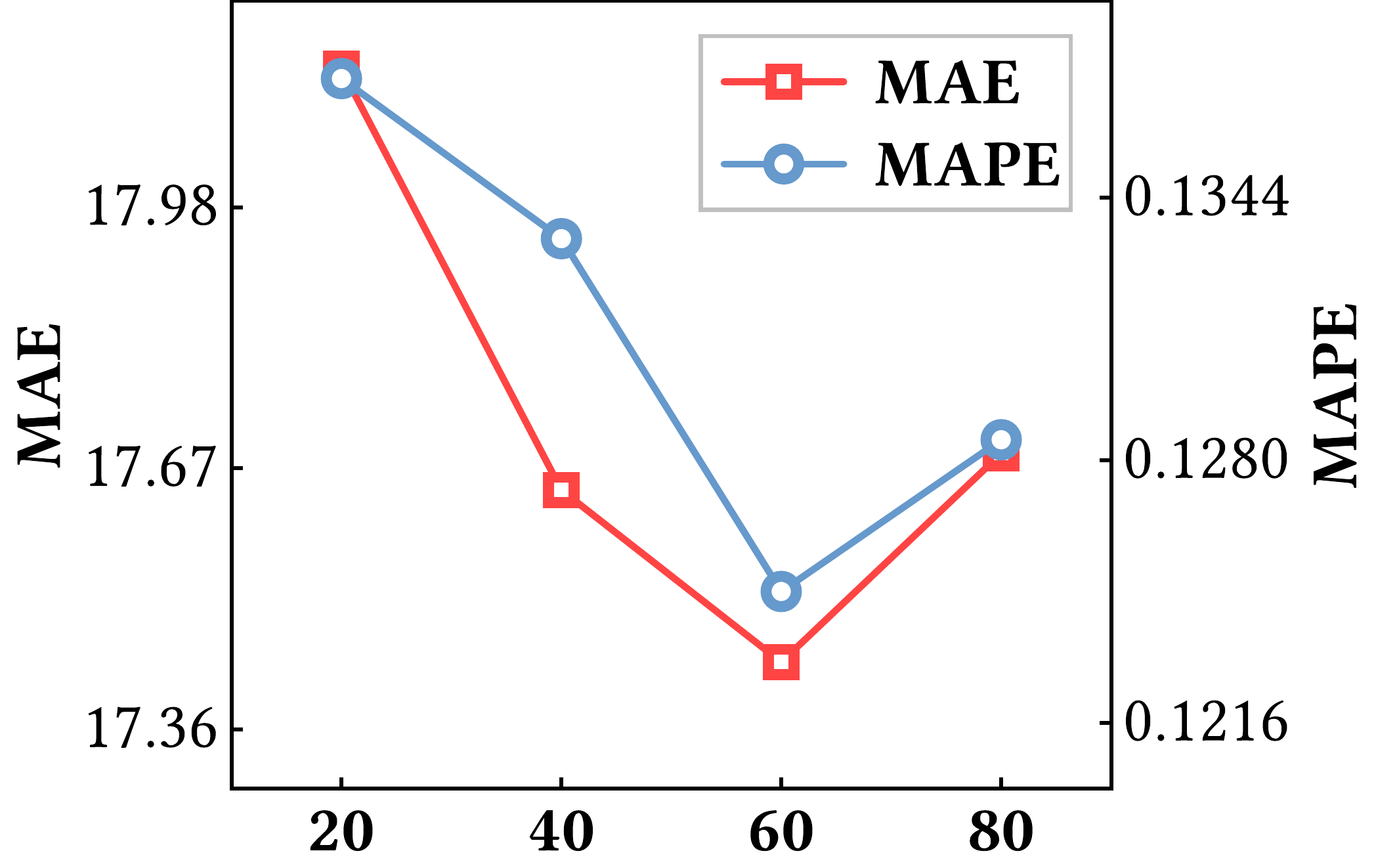}
        \caption{{$r$} on CA}
    \end{subfigure}
    \caption{Hyperparameter sensitivity analysis.}
    \label{exp:Hyper}
\end{figure}

\textbf{The number of nodes per patch $C$.} Performance peaks when the number of nodes per patch $C$ and the number of patches $P$ are numerically balanced. SqLinear obtains the best accuracy with $C=48$ on GBA and $C=86$ on CA. This pattern reflects {a trade-off in patch granularity: larger patches let each patch span a richer local neighborhood}, while too-large patches reduce the granularity of global interaction.

\textbf{The number of layers $L$.} The optimal depth depends on graph scale. Larger networks such as CA require deeper interaction layers to propagate information across broader spatial regions, whereas shallower models are sufficient for smaller graphs. Excessive depth can add unnecessary computation and may lead to overfitting on smaller datasets.

\textbf{The low-rank dimension {$r$}.} A moderate low-rank dimension is enough to encode latent spatial relations. Increasing {$r$} initially improves representation capacity, but overly large values introduce redundancy and increase memory usage without clear accuracy gains.

\subsection{Partition Strategy Analysis (RQ6)}
\label{sec:partition_strategy}
\textbf{Topology-Aware Extension.} We compare Square Partition with Geometry-First Road Partitioning (GFRP) and Topology-First Road Grouping (TFRG), whose definitions are provided in Section~\ref{ScalabilityAnalysis}. Fig.~\ref{Scalability} shows that Square Partition consistently achieves better forecasting accuracy. Although GFRP and TFRG explicitly preserve road topology, they can generate irregular or unbalanced groups that are less suitable for efficient tensorized modeling.
This result indicates that SqLinear benefits more from compact and balanced spatial patches than from hard-coded topological grouping. Square Partition provides local spatial context in each patch, while the Hierarchical Linear Interaction module learns long-range cross-region relations from data. This division of labor makes the overall architecture more flexible for large-scale traffic forecasting.
\begin{figure}[!tb]
    \vspace{0.1cm}
    \centering
    \begin{subfigure}{0.46\linewidth}
        \includegraphics[width=\linewidth]{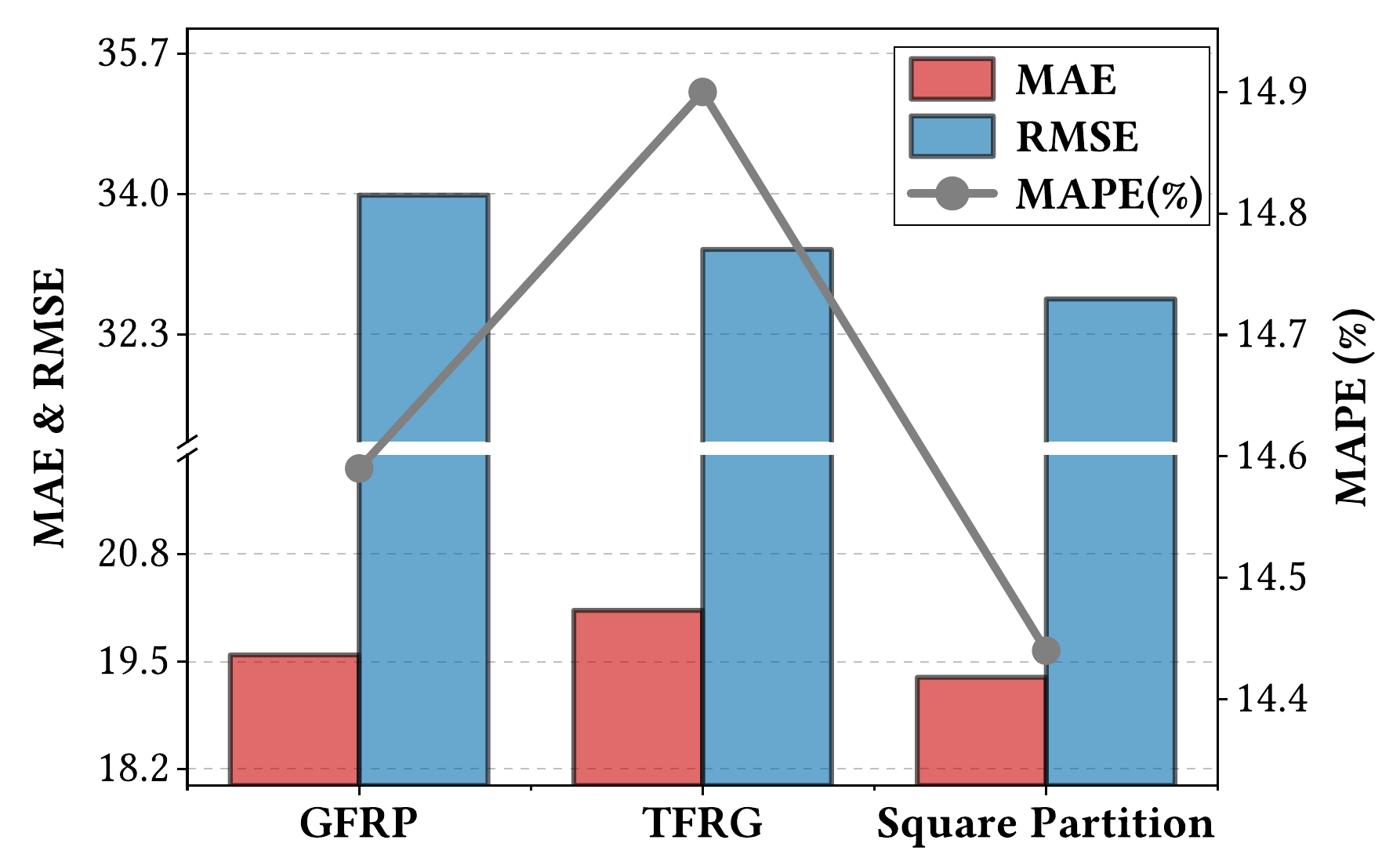}
        \caption{GBA dataset}
    \end{subfigure}
    \hfill
    \begin{subfigure}{0.46\linewidth}
        \includegraphics[width=\linewidth]{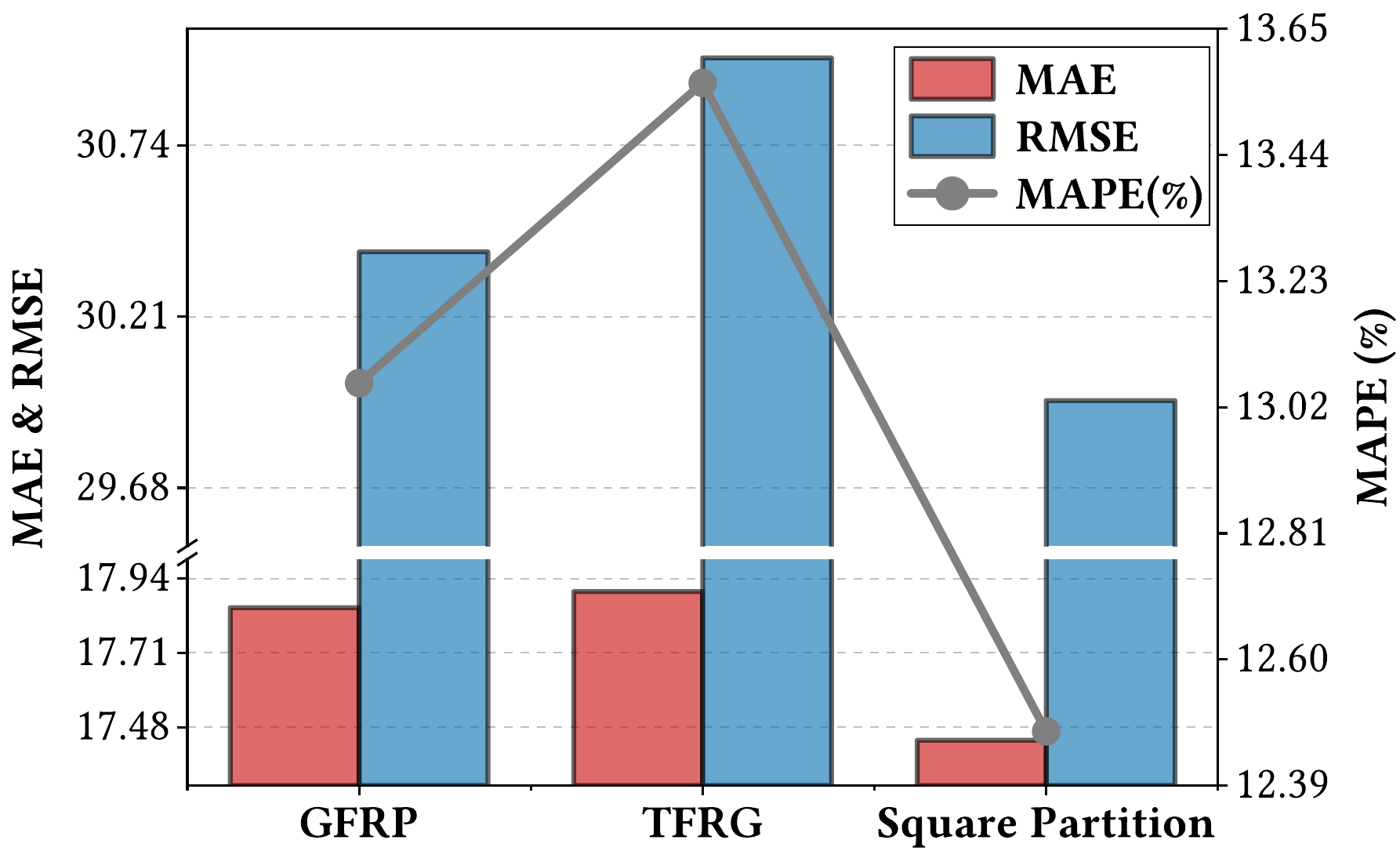}
        \caption{CA dataset}
    \end{subfigure}
    \caption{Partition strategy analysis.}
    \label{Scalability}
\end{figure}

\textbf{Empirical partition quality.} Table~\ref{exp:partition_quality} empirically grounds the partition properties analyzed in Section~\ref{subsubsec:partition_analysis} on the representative CA dataset, reporting the three quantities defined there: patch utilization $\eta$ (\textbf{Definition~\ref{def:Patch Utilization}}), the patch-size coefficient of variation (CV) as {a leaf-level dispersion measure complementary to the per-split balance degree $\delta$} (\textbf{Definition~\ref{def:Node Balance Degree}}), and region aspect ratio $\alpha$ (\textbf{Definition~\ref{def:Region Aspect Ratio}}) for compactness.
Square Partition is the only method that attains both full utilization ($\eta=1$) and perfect balance ($\text{CV}=0$) on every dataset, {both of which follow from \textbf{Theorem~\ref{theorem:Patch Utilization}}, since equal-sized leaves imply $\text{CV}=0$ and $\eta=1$}. For compactness, its longest-span splitting rule keeps the p90 aspect ratio strictly below that of the K-D Tree on all four datasets (e.g., $13.47$ vs.\ $21.89$ on SD and $5.68$ vs.\ $9.56$ on GLA), because it never subdivides the already-shorter side, whereas the alternating-axis rule can amplify elongation in the tail. Grid and Quadtree occasionally achieve a lower aspect ratio on small datasets, but only by packing nodes into a few oversized cells, which harms both balance (CV up to $1.59$) and utilization (down to $0.16$). These measurements confirm that Square Partition realizes the utilization and balance guarantees of the analysis while remaining the most compact among capacity-balanced partitions in practice.

\begin{table}[tb]
\centering
\caption{Empirical partition quality on the four datasets. \textbf{Bold} marks the best per column within each dataset.}
\label{exp:partition_quality}
\setlength{\tabcolsep}{5pt}
\begin{tabular}{l l c c c c}
\toprule
\multirow{2}{*}{Dataset} & \multirow{2}{*}{Method} & Util.\ $\uparrow$ & Balance $\downarrow$ & \multicolumn{2}{c}{Aspect Ratio $\downarrow$} \\
\cmidrule(lr){5-6}
 & & $\eta$ & CV & median & p90 \\
\midrule
\multirow{4}{*}{SD}
 & Grid     & 0.260 & 0.866 & \textbf{1.83} & \textbf{5.44} \\
 & Quadtree & 0.630 & 0.404 & 2.76 & 14.87 \\
 & K-D Tree & 0.932 & 0.088 & 4.00 & 21.89 \\
 & \textbf{Square} & \textbf{1.000} & \textbf{0.000} & 2.65 & 13.47 \\
\midrule
\multirow{4}{*}{GBA}
 & Grid     & 0.327 & 0.836 & \textbf{1.27} & 2.29 \\
 & Quadtree & 0.445 & 0.561 & 1.56 & 6.65 \\
 & K-D Tree & 0.993 & 0.012 & 1.86 & 5.47 \\
 & \textbf{Square} & \textbf{1.000} & \textbf{0.000} & 1.85 & \textbf{4.62} \\
\midrule
\multirow{4}{*}{GLA}
 & Grid     & 0.161 & 1.594 & 2.24 & 14.63 \\
 & Quadtree & 0.488 & 0.503 & 2.22 & 10.73 \\
 & K-D Tree & 0.998 & 0.005 & 2.11 & 9.56 \\
 & \textbf{Square} & \textbf{1.000} & \textbf{0.000} & \textbf{1.68} & \textbf{5.68} \\
\midrule
\multirow{4}{*}{CA}
 & Grid     & 0.172 & 1.483 & 1.55 & \textbf{4.53} \\
 & Quadtree & 0.413 & 0.609 & \textbf{1.47} & 5.36 \\
 & K-D Tree & 0.988 & 0.006 & 2.06 & 4.66 \\
 & \textbf{Square} & \textbf{1.000} & \textbf{0.000} & 1.70 & 4.54 \\
\bottomrule
\end{tabular}
\end{table}

\subsection{Case Study (RQ7)}
\label{app:case-study}
% \begin{figure}[tb]
%   \centering
%   \includegraphics[width=\linewidth]{pics/SDshowcase.pdf}
%   \caption{Visualization of Partitioning results on the SD dataset.}
%   \label{Visualization:SDshowcase}
% \end{figure}
\begin{figure}[tb]
  \centering
  \includegraphics[width=\linewidth]{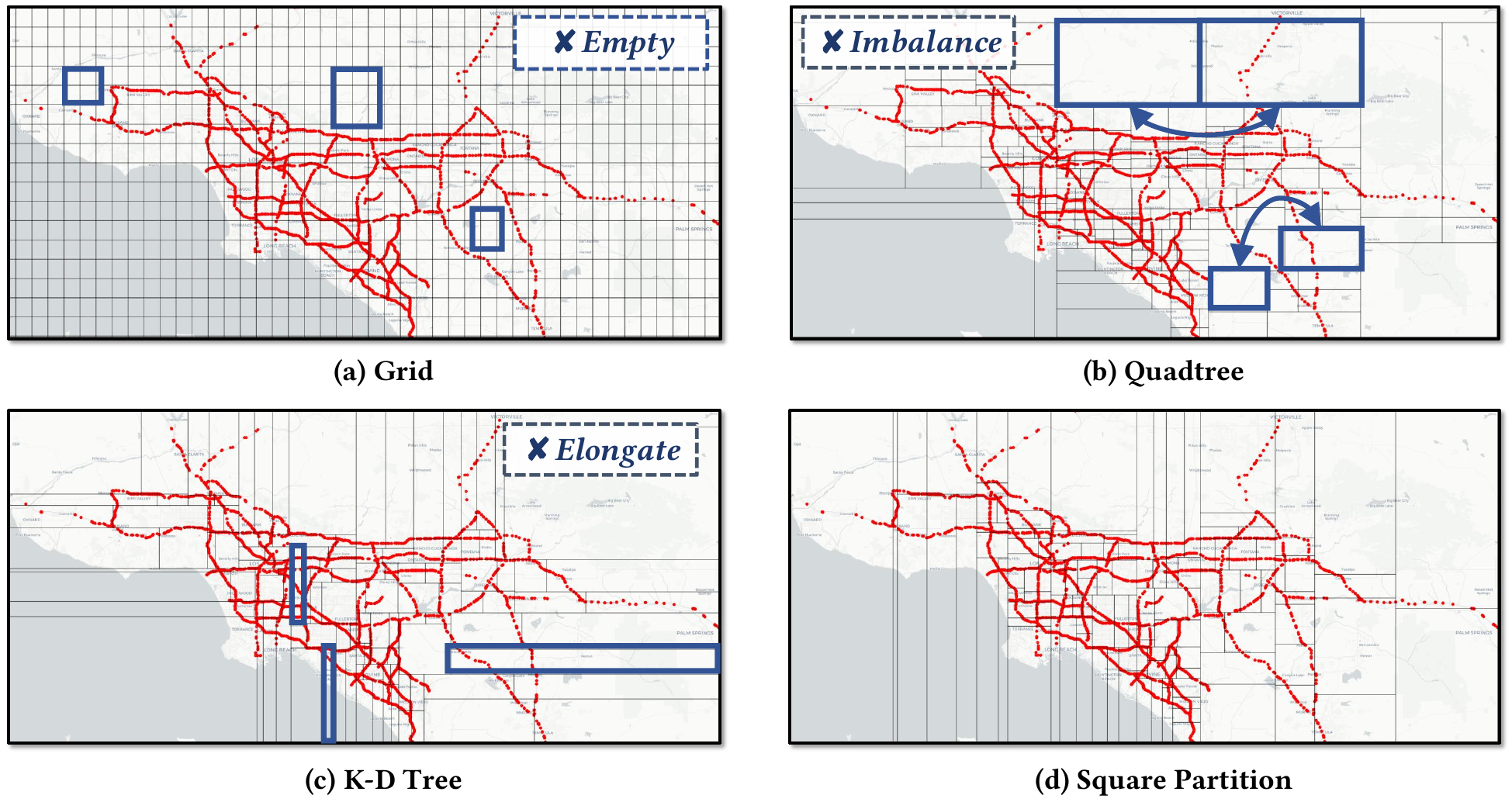}
  \caption{Visualization of partitioning results.}
  \label{Visualization:GLAshowcase}
\end{figure}
To provide an intuitive understanding of SqLinear, we present a case study visualizing the partitioning and the patterns learned by the method. While this study is based on the SD and GLA, the key observations and learned patterns also apply to the other datasets.

\textbf{Visual Analysis of Spatial Partitioning.} Fig.~\ref{Visualization:GLAshowcase} displays the partitioning results, while comparing also with the Grid, Quadtree, and K-D Tree approaches. The blue annotations highlight the limitations of the Grid, Quadtree, and K-D Tree approaches. The case study illustrates that Square Partition consistently generates contiguous, balanced, strictly non-overlapping, and approximately square spatial patches without requiring padding.

\textbf{Interpretability of Hierarchical Interaction.} Fig.~\ref{exp:casestudy}(a) visualizes the learned weights of the {inter-patch communication module}, while Fig.~\ref{exp:casestudy}(b) shows {the correlation among the learned node representations} within the 118th patch. Fig.~\ref{exp:casestudy}(c) shows the real-world locations of the 118th patch and the 473rd node, providing geographical context. {Given that the inter-patch weights are directly learned and the node representations are directly observable, both} offer transparent means of interpreting the internal operational mechanisms of SqLinear. A key observation is that the 118th patch, which corresponds to a major arterial intersection, is assigned a high importance weight. Furthermore, within this patch, {the central 473rd node's representation correlates more strongly with its downstream nodes than with others}. This pattern aligns closely with real-world traffic propagation dynamics, thereby validating the model's ability to capture meaningful spatio-temporal dependencies.
\begin{figure}[h]
    \centering
    \includegraphics[width=\linewidth]{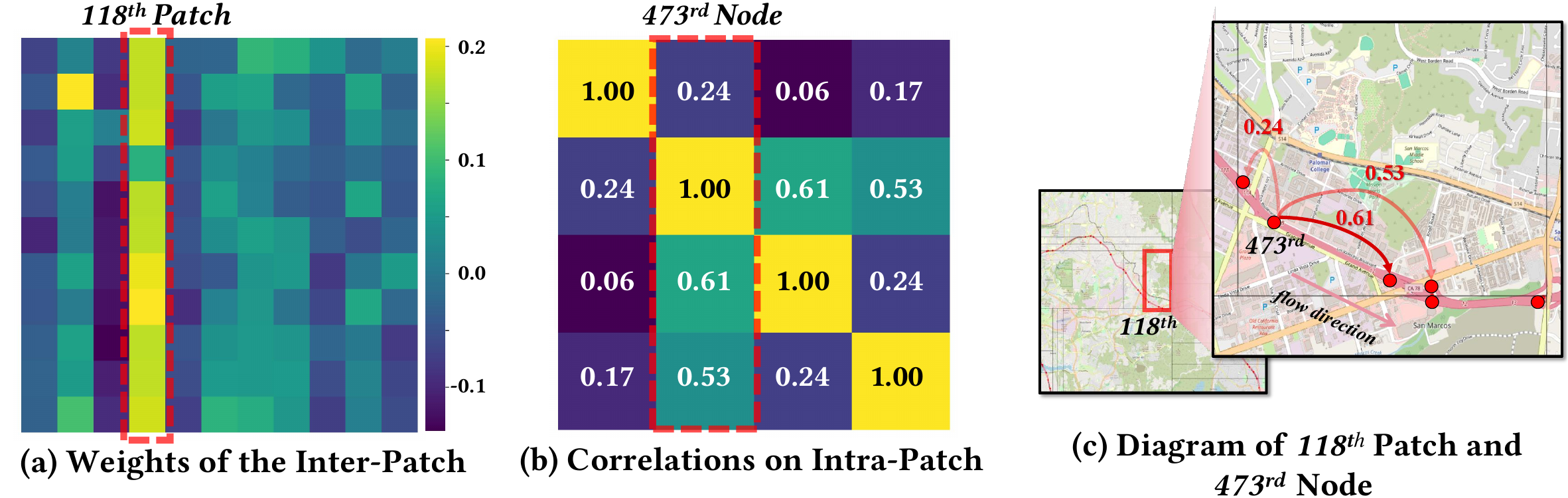}
      \caption{Visualization of learned spatial dependencies.}
      \label{exp:casestudy}
\end{figure}

\subsection{Limitations and Future Work}
\label{app:limitations}
While SqLinear achieves state-of-the-art performance across four large-scale datasets, limitations suggest promising directions for future research. The linear interaction module is optimized for efficiency, but it may have limited capacity to model abrupt and highly nonlinear dynamics, such as those caused by traffic accidents. Consequently, investigating hybrid linear--nonlinear architectures that balance computational efficiency and expressive power is a critical avenue for enhancing model robustness.